\documentclass{article}

\usepackage[preprint]{neurips_2026}

\usepackage[utf8]{inputenc}
\usepackage[T1]{fontenc}
\usepackage{url}
\usepackage{booktabs}
\usepackage{amsfonts}
\usepackage{amsmath}
\usepackage{amssymb}
\usepackage{amsthm}
\usepackage{nicefrac}
\usepackage{microtype}
\usepackage{xcolor}
\usepackage{graphicx}
\usepackage{wrapfig}

\makeatletter
{\catcode`\$=3 \gdef\ordmathshift{$}}
\AtBeginDocument{\catcode`\$=\active}
{\catcode`\$=\active
 \protected\gdef${\ifmmode\ordmathshift\endgroup
   \else\begingroup\ifdim\f@size\p@=10\p@\small\fi\ordmathshift\fi}}
\makeatother

\newtheoremstyle{boldital}%
  {\topsep}{\topsep}
  {\itshape}
  {0pt}
  {\bfseries}
  {.}
  { }
  {\thmname{#1}\thmnumber{ #2}\thmnote{ (\itshape #3)}}

\theoremstyle{boldital}
\newtheorem{theorem}{Theorem}
\newtheorem{proposition}{Proposition}
\newtheorem{definition}{Definition}
\newtheorem{assumption}{Assumption}
\newtheorem{lemma}{Lemma}

\newcommand{\R}{\mathbb{R}}

\usepackage[breaklinks=true,hypertexnames=false]{hyperref}

\title{Attributing Emergence in Million-Agent Systems}

\author{%
  \textbf{Ling Tang}\textsuperscript{1,2}\thanks{Equal contribution. Work done during an internship at Shanghai Artificial Intelligence Laboratory.}\quad
  \textbf{Jilin Mei}\textsuperscript{1,3}\footnotemark[\value{footnote}]\quad
  \textbf{Qian Chen}\textsuperscript{1,4}\quad
  \textbf{Qihan Ren}\textsuperscript{1,2}\quad
  \textbf{Linfeng Zhang}\textsuperscript{2}\\
  \textbf{Quanshi Zhang}\textsuperscript{2}\quad
  \textbf{Jing Shao}\textsuperscript{1}\quad
  \textbf{Xia Hu}\textsuperscript{1}\quad
  \textbf{Dongrui Liu}\textsuperscript{1}\thanks{Correspondence
to: Dongrui Liu <liudongrui@pjlab.org.cn>}\\[4pt]
  \textsuperscript{1}Shanghai Artificial Intelligence Laboratory \quad
  \textsuperscript{2}Shanghai Jiao Tong University \\
  \textsuperscript{3}Fudan University \quad
  \textsuperscript{4}Tongji University}

\begin{document}

\maketitle

\begin{abstract}
Large language models (LLMs) can simulate human-like reasoning and decision-making in individual agents. LLM-powered multi-agent systems (MAS) combine such agents to simulate population-scale social phenomena such as polarization, information cascades, and market panics. Such studies require attributing macro emergence to individual agents, but existing axiomatic methods scale combinatorially in $N$ and have been confined to $N \lesssim 10^3$, while the phenomena they explain occur at $N \geq 10^6$. We address this gap by adapting Aumann--Shapley path-integral attribution to LLM-powered MAS at million-agent scale; the resulting method satisfies all four axioms, runs three to five orders of magnitude faster than sampled Shapley on the same hardware, and extends feasible axiomatic attribution by over three orders of magnitude (a $1670\times$ jump). We use this method to test the scale gap empirically: across 14 days of public Bluesky data ($1{,}671{,}587$ active users, five topics), we compute the attribution at both full scale and the visibility-biased $N = 10^2$ convenience sample used by small-scale studies, and the two disagree structurally. At full scale the long tail and middle tier jointly carry the majority; the biased small panel shifts about twice that share onto the upper follower tiers ($48\%$ versus $24\%$). We then prove that the disagreement cannot in general be reduced by post-hoc rescaling: an Attribution Scaling Bias theorem shows that a reconciling global rescaling factor exists exactly when the macro indicator is linear over agents, and our nonlinear indicators give residuals of $0.10$--$0.98$. For such nonlinear indicators, full-scale attribution is therefore a requirement rather than a methodological choice.
\end{abstract}

\section{Introduction}
\label{sec:intro}

\begin{figure}[!t]
\centering
\includegraphics[width=\linewidth]{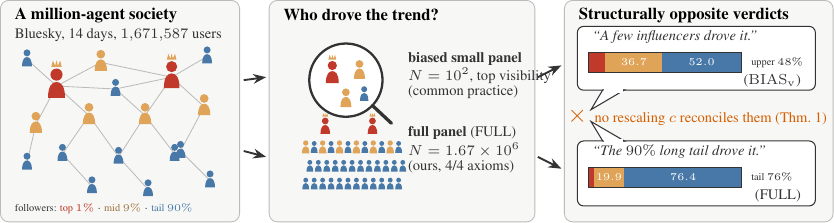}
\caption{\textbf{Attribution gives structurally opposite verdicts on a biased small panel and on the full system.} Aumann--Shapley path-integral attribution (4/4 axioms, $O(N)$ time) on the visibility-biased $N = 10^2$ panel ($\rm BIAS_v$) and on the full $N = 1.67 \times 10^6$ Bluesky panel (Mythos, $f^{\rm heat}$). The biased panel assigns $48\%$ to the combined upper tiers ($G_{\rm top} + G_{\rm mid}$) versus $24\%$ at full scale, where the $90\%$ long tail carries $76.4\%$; no global rescaling factor reconciles the two (residual $\varepsilon \approx 0.13$; Theorem~\ref{thm:asb}).}
\label{fig:teaser}
\end{figure}

Large language models (LLMs) can simulate human-like reasoning and decision-making in individual agents~\citep{argyle2023out, horton2023large}. Multi-agent systems (MAS) have long been used to study how interactions among autonomous agents produce macro-level emergent phenomena~\citep{wooldridge2009introduction}. LLM-powered MAS combine the two, retaining the social realism of LLM agents at the population scale of classical MAS, and have become a common platform for simulating large-scale social phenomena. Recent applications span generative agent societies~\citep{park2023generative}, macroeconomic activity~\citep{li2024econagent}, financial markets~\citep{yang2025twinmarket}, and million-agent social platforms~\citep{yang2024oasis, piao2025agentsociety, ashery2025emergent}.

These systems reproduce macro emergent phenomena observed in human society, including polarization~\citep{bail2018exposure}, information cascades~\citep{vosoughi2018spread, goel2016structural}, and market panics~\citep{brunnermeier2009deciphering}. Scientific use of these reproductions raises two questions about each phenomenon: \emph{which agents drove it?} and \emph{when did it turn?} The macro curve answers neither; both require decomposing the macro indicator into per-agent, per-step contributions. Throughout, ``agent $i$ drove the phenomenon'' is shorthand for the structural statement that agent $i$ carries a large share of the axiomatic attribution of the macro indicator, not a claim of counterfactual causation.

This paper studies emergence attribution in LLM-powered MAS at realistic scale. Existing attribution methods require re-running the simulation over many agent subsets, at cost combinatorial in the number of agents $N$. Recent adaptations of axiomatic Shapley attribution to LLM-powered MAS~\citep{tang2026interpreting, luagent} remain confined to $N \lesssim 10^3$, while the social phenomena they target, such as online cascades~\citep{vosoughi2018spread} and election-period opinion turns~\citep{bond2012million}, occur at $N \geq 10^6$. Whether agent-level conclusions drawn at small scale survive at realistic scale is an open question. We give the first systematic answer.

We develop an attribution framework that satisfies the four Aumann--Shapley axioms and scales to millions of agents (Section~\ref{sec:theory}). The framework rests on the Aumann--Shapley path integral~\citep{aumann1974values}, reaches $N = 10^6$ in under ten milliseconds, and is three to five orders of magnitude faster than sampled Shapley on the same hardware, over three orders of magnitude (a $1670\times$ jump) beyond the largest agent count reported in prior axiomatic work.

Equipped with this framework, we run a controlled cross-scale study on the complete public Bluesky dataset~\citep{failla2024ibluesky, seckin2025rise} of $1{,}671{,}587$ active users over April 7--20, 2026 (Section~\ref{sec:finding}). For each topic we attribute the macro indicator both on the full panel and on the visibility-biased convenience sample of $N = 10^2$ that an LLM-powered MAS case study would in practice use. The two answers disagree structurally: at full scale the long tail and middle tier jointly carry the majority of the attribution; under the biased small panel the same axiomatic method shifts about twice that share onto the upper follower tiers ($48\%$ versus $24\%$; Figure~\ref{fig:teaser}, Table~\ref{tab:flip-main}). A uniform random sample of the same size produces no such inflation, though it undershoots rather than recovers the full-scale shares: no small panel of either composition reproduces the full-scale verdict (Appendix~\ref{app:numbers}).

Nor can the disagreement be removed by post-processing. Our \emph{Attribution Scaling Bias} theorem (Theorem~\ref{thm:asb}, Section~\ref{sec:theory}) shows that exactly one reconciling global factor always works under linearity, while outside the linear class the guarantee fails: the measured residual is uniformly positive for every nonlinear indicator we study, with an exact formula for variance (Table~\ref{tab:theorem-test}, Proposition~\ref{prop:residual}). The macro indicators of practical interest, including variance, the Gini coefficient, cascade depth~\citep{goel2016structural}, and learnable nonlinear value networks, are all nonlinear, so post-hoc correction cannot be relied on.

We do not claim that every LLM-powered MAS study must be conducted at million-agent scale. We claim only that any conclusion of the form ``\emph{this kind of agent drives this kind of phenomenon}'' should rest either on analysis at the phenomenon's realistic scale or on cross-scale consistency evidence.

\section{Related Work}
\label{sec:related}

Prior research on LLM-powered MAS has not engaged directly with the scale dependence we report in Section~\ref{sec:finding} (agent count rather than model parameters; Appendix~\ref{app:scaling}). We discuss the three lines of work that bear most directly on agent-level attribution below.

\textbf{Qualitative reproduction of macro emergence in LLM-powered MAS.} A first line of work has demonstrated that LLM-powered MAS can reproduce a wide range of social phenomena. Generative agent societies~\citep{park2023generative}, macroeconomic activity~\citep{li2024econagent}, financial markets~\citep{yang2025twinmarket}, and large-scale social platforms~\citep{yang2024oasis, piao2025agentsociety, gao2023s3, ashery2025emergent, tang2024gensim} have replicated polarization, information cascades, market crashes, and convention formation. Parallel work on collaborative agent pipelines such as AutoGen~\citep{wu2023autogen}, MetaGPT~\citep{hong2024metagpt}, ChatDev~\citep{qian2024chatdev}, and CAMEL~\citep{li2023camel} has reported emergent coordination in task-solving settings, and recent surveys~\citep{mou2024survey} have consolidated these directions. We do not add a new emergent phenomenon; we ask who drove a known one, and when.

\textbf{Axiomatic Shapley-type attribution.} A second line has applied cooperative-game attribution to multi-agent settings, building on the \citet{shapley1953value} value and its non-atomic extension by \citet{aumann1974values}. The Shapley value has been adapted to multi-agent reinforcement learning for credit assignment~\citep{wang2020shapleyq, wang2022shaq}, to data valuation in machine learning~\citep{ghorbani2019dataShapley, jia2019knn, wang2023banzhaf}, and most recently to LLM-powered MAS itself~\citep{tang2026interpreting, luagent}. These methods inherit strong axiomatic guarantees but require evaluating the system on many agent subsets, with cost combinatorial in $N$; published experiments accordingly remain at $N \lesssim 10^3$. Leave-one-out and Banzhaf semi-values~\citep{koh2017understanding, wang2023banzhaf, luagent} satisfy three of the four axioms (symmetry, dummy, linearity) while losing efficiency, and share the same rerun cost. We retain the same axioms while removing the rerun cost. The computational core we use is not new in itself: the straight-path Aumann--Shapley value coincides with Integrated Gradients~\citep{sundararajan2017axiomatic} applied with agents as features. What is new here is the agent-as-feature MAS framing, the closed-form $O(N)$ instantiations that reach $N = 10^6$, and the cross-scale analysis this unlocks (Theorem~\ref{thm:asb}).

\textbf{Heuristic and prompt-based tracing of MAS failures.} A third line has framed agent-level attribution as a labelling problem over execution traces. MAST~\citep{cemri2025mast}, Who\&When~\citep{zhang2025whoandwhen}, AgenTracer~\citep{zhang2025agenttracer}, and FAMAS~\citep{ge2025famas} have used trained judges, spectrum-based suspicion scores, or LLM-as-judge prompts to localize faulty agents in failed multi-agent runs. These methods scale to large traces but provide no axiomatic guarantees, and reported agent-level accuracy has remained below $55\%$ on the Who\&When benchmark~\citep{zhang2025whoandwhen}. We target macro emergence rather than failure, and rely on axioms rather than learned judges.

Table~\ref{tab:methods} summarizes the three lines along the two axes that matter for our use case: how many of the four Aumann--Shapley axioms (efficiency, symmetry, dummy, linearity) the method satisfies, and the largest agent count at which the method has been reported in published work. The Aumann--Shapley attribution we develop in Section~\ref{sec:theory} retains all four axioms while extending the reported $N$ by over three orders of magnitude (a $1670\times$ jump).

\begin{table}[!t]
\caption{Attribution methods compared on two axes: number of axioms satisfied (efficiency, symmetry, dummy, linearity) and the largest agent count reported in published experiments.}
\label{tab:methods}
\centering
\small
\setlength{\tabcolsep}{8pt}
\begin{tabular}{lcc}
\toprule
Method & Axioms & Reported $N$ \\
\midrule
MAST~\citep{cemri2025mast}                & $0/4$  & $\leq 5$        \\
Who\&When~\citep{zhang2025whoandwhen}     & $0/4$  & $\leq 5$        \\
AgenTracer~\citep{zhang2025agenttracer}   & $0/4$  & $\leq 5$        \\
FAMAS~\citep{ge2025famas}                 & $0/4$  & $\leq 5$        \\
Leave-one-out~\citep{luagent}             & $3/4$  & $\sim 10^3$    \\
Banzhaf~\citep{luagent}                   & $3/4$  & $\sim 10^3$    \\
Shapley~\citep{tang2026interpreting}      & $4/4$  & $\sim 10^3$    \\
\textbf{Aumann--Shapley (ours)}            & $\mathbf{4/4}$  & $\boldsymbol{1.67 \times 10^6}$ \\
\bottomrule
\end{tabular}
\end{table}

\section{Finding: Attribution Flips with Sampling Scale}
\label{sec:finding}

This section reports the central empirical observation, using complete public Bluesky data as the testbed: it carries the structural features that LLM-powered MAS studies aim to reproduce (heavy-tailed follower distribution, cross-tier engagement dynamics, real event timing) at the $N \geq 10^6$ scale where the phenomena of interest occur. The method itself requires only per-agent features over time and applies to any LLM-powered MAS pipeline whose macro indicator is (or is smoothed into) a $C^2$ function of those features (Section~\ref{sec:theory}); on three LLM-powered MAS scenarios it agrees with sampled Shapley in agent ranking wherever the latter is feasible, and remains computable where it is not (Appendix~\ref{app:llmmas}).

At full scale the long tail carries the majority of the attribution; on the small visibility-biased convenience sample that an LLM-powered MAS case study would in practice use, the same axiomatic method concentrates the result on the upper follower tiers, across four value functions and five topics spanning technology, politics, sports, society, and entertainment (Sections~\ref{sec:finding-setup}--\ref{sec:finding-flip}, Table~\ref{tab:flip-main}). Full attribution also recovers topic-specific temporal structure that the biased panel collapses to a single shape (Section~\ref{sec:finding-universal}).

\subsection{Setup}
\label{sec:finding-setup}

\textbf{Data.} We use the complete public AT-Protocol Jetstream feed of Bluesky from April 7 to April 20, 2026, fourteen days.\footnote{Data ethics, public-data boundaries, de-identification safeguards, and misuse-risk considerations are in Appendix~\ref{app:ethics}; the firehose endpoint, exact capture window, filtering pipeline, topic matching, and sampling seeds for end-to-end reproducibility are in Appendix~\ref{app:reproducibility}.} After standard filtering (Appendix~\ref{app:topics}) the active panel contains $|[N]| = 1{,}671{,}587$ users. We select five topics in distinct domains and report all five in Table~\ref{tab:flip-main}; the figures focus on the technology-and-subculture topic Mythos, whose multi-modal temporal trajectory makes the per-day cross-tier structure most visible (other topics: Appendix~\ref{app:topics}).

\textbf{Per-agent feature.} Each agent $i$ is described by a three-dimensional feature $z_i = (a_i, b_i, c_i)$, where $a_i = \log(1+\text{followers}_i)$ is reach, $b_i = \log(1+\text{topic\_posts}_i)$ is topic-specific activity, and $c_i = \log(1+\text{topic\_replies\_received}_i)$ is topic-specific resonance. For any subset $S \subseteq [N]$ with $|S| = n$, we write $m_d(S) := (1/n)\sum_{i \in S} d_i$ for the per-coordinate mean, $d \in \{a,b,c\}$, and $g_i := a_i + b_i + c_i$ for the per-agent additive composite. Inactive accounts retain a non-zero $a_i$, so high-follower users without topic engagement can still receive non-zero attribution.

\textbf{Three-tier follower partition.} We rank all $|[N]|$ users by follower count and split them once into three disjoint groups: $G_{\rm top}$, the top $1\%$ ($16{,}716$ users); $G_{\rm mid}$, the next $9\%$ ($150{,}443$ users); and $G_{\rm tail}$, the remaining $90\%$ ($1{,}504{,}428$ users). Two largely disjoint alternative elite anchors (top $1\%$ by posts and by replies received) are analyzed in Appendix~\ref{app:kol-anchors}.

\textbf{Relative attribution share.} For any subset $S \subseteq [N]$ and any group $G$, we report the share of the within-$S$ normalized attribution carried by agents that lie in $G$:
\begin{equation}
R_G^S \;:=\; \sum_{i \in S \cap G} \tilde\phi_i^S,
\label{eq:R-def}
\end{equation}
where $\tilde\phi_i^S$ is the normalized Aumann--Shapley attribution of agent $i \in S$ (formally Definition~\ref{def:as} in Section~\ref{sec:theory}, with $\sum_{i \in S} \tilde\phi_i^S = 1$ in the cases reported here). Because the three follower tiers partition the panel, $R_{G_{\rm top}}^S + R_{G_{\rm mid}}^S + R_{G_{\rm tail}}^S = 1$ for every subset $S$. Under $f^{\rm lin}$ and $f^{\rm heat}$ the per-agent attributions are non-negative and $R_G^S$ is a genuine fraction of the total; under the concentration indicators $f^{\rm var}$ and $f^{\rm gini}$ below-mean agents receive negative attribution (Appendix~\ref{app:f-derivations}), so $R_G^S$ is a \emph{net} (signed) share that can in principle leave $[0,1]$. All ``share'' and ``mass'' statements for those two indicators are to be read as net shares. The flip we report below is a structural change in the values of $R_{G_{\rm top}}^S$, $R_{G_{\rm mid}}^S$, $R_{G_{\rm tail}}^S$ as $S$ moves from a biased small panel to the full panel.

\textbf{Four analytic value functions.} We instantiate $f$ as four analytic macro indicators, each capturing a different facet of how the panel's three features aggregate into a scalar:
\begin{itemize}\setlength\itemsep{0pt}
\item $f^{\rm lin}(z_S) = \tfrac{1}{n}\sum_{i\in S}(a_i+b_i+c_i)$, the linear additive baseline (Definition~\ref{def:linear} satisfied);
\item $f^{\rm heat}(z_S) = \log(1 + m_a(S)\, m_b(S)\, m_c(S))$, multiplicative reach $\times$ activity $\times$ resonance with logarithmic saturation, in the spirit of BM25~\citep{robertson1994some};
\item $f^{\rm var}(z_S) = \tfrac{1}{n}\sum_{i\in S}(g_i - \bar{g}_S)^2$ with $\bar{g}_S = m_a(S)+m_b(S)+m_c(S)$, the population variance of the additive composite;
\item $f^{\rm gini}(z_S) = \tfrac{1}{2n^2}\sum_{i,j\in S} |g_i - g_j|$, the absolute Gini mean difference~\citep{gini1912variabilita}.
\end{itemize}
The Aumann--Shapley path-integral attribution $\phi_{i,t}$ used throughout this section is defined formally in Section~\ref{sec:theory} (Equation~\ref{eq:phi-it}); the per-$f$ derivations are collected in Appendix~\ref{app:f-derivations} (Equations~\ref{eq:phi-lin}--\ref{eq:phi-gini}). None of the four involves any learned parameter, so attribution at every scale is independent of training noise.

\textbf{Sampling protocols.} A subset $S$ of size $N$ is drawn either by a \emph{biased} protocol or by the \emph{random} protocol. The biased protocol fixes a visibility pool of the top $5\%$ of users by a composite visibility score (followers plus topic engagement; Appendix~\ref{app:strategies}) and draws $N$ users uniformly from this pool. The random protocol draws $N$ users uniformly from the full $|[N]|$. Both protocols collapse to the full panel at $N = |[N]|$. The biased protocol is a plausible stand-in for how small-$N$ LLM-powered MAS case studies often construct their panels; we discuss two further biased variants and a complete dose-response sweep in Appendix~\ref{app:strategies}. We draw $10$ independent subsamples per $(N, \text{strategy}, f, \text{topic})$ cell, except at the full panel (a unique subset, single run), and report means across runs.

\subsection{Small-scale attribution misallocates the macro indicator}
\label{sec:finding-flip}

\begin{figure}[!t]
\centering
\includegraphics[width=\linewidth]{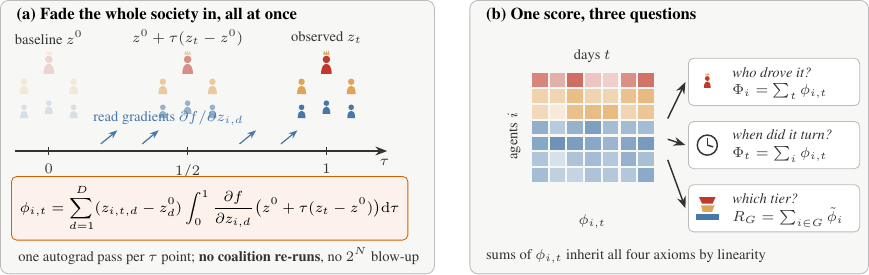}
\caption{\textbf{Aumann--Shapley path-integral attribution and its aggregation.} \textbf{(a)} Features fade in simultaneously from the baseline $z^0$ along the straight path $z^0 + \tau(z_t - z^0)$; agent $i$'s attribution $\phi_{i,t}$ integrates its own gradient along this path, with no coalition re-runs. \textbf{(b)} By linearity, the scores aggregate into the agent total $\Phi_i$ (\emph{who drove it}), the step total $\Phi_t$ (\emph{when did it turn}), and the tier share $R_G$.}
\label{fig:method}
\end{figure}

\begin{table}[t]
\caption{Top-tier relative attribution share $R_{G_{\rm top}}^S$ across five topics (columns) under three nonlinear value functions (row blocks) and four sampling protocols at $N = 10^2$. Each cell reports $R_{G_{\rm top}}^S$ in percent followed by $\Delta R_{G_{\rm top}} := R_{G_{\rm top}}^S - R_{G_{\rm top}}^{\rm full}$ in pp; the FULL row gives $R_{G_{\rm top}}^{\rm full}$. Small-panel rows are means over 10 seeds. The full three-tier breakdown and the linear baseline $f^{\rm lin}$ are in Table~\ref{tab:flip-app-cross-f}, and seed standard deviations are summarized in Appendix~\ref{app:topics}.}
\label{tab:flip-main}
\centering
\footnotesize
\setlength{\tabcolsep}{4pt}
\begin{tabular}{lccccc}
\toprule
 & Mythos & Trump-Tariffs & The Masters & Earth Day & WrestleMania \\
 & (tech) & (politics) & (sports) & (society) & (entertainment) \\
\midrule
\multicolumn{6}{l}{\emph{$f^{\rm heat}$ \quad multiplicative-saturating heat}} \\
$\rm BIAS_v$        & $11.3\,(+7.6)$  & $12.1\,(+5.6)$  & $10.8\,(+6.4)$  & $14.7\,(+9.3)$  & $7.8\,(+6.0)$  \\
$\rm BIAS_{xf}$     & $5.8\,(+2.1)$   & $10.5\,(+3.9)$  & $6.1\,(+1.8)$   & $9.3\,(+3.9)$   & $2.2\,(+0.4)$  \\
$\rm BIAS_{top}$    & $5.7\,(+2.1)$   & $9.9\,(+3.3)$   & $7.4\,(+3.1)$   & $7.9\,(+2.5)$   & $2.2\,(+0.4)$  \\
$\rm RAND$          & $0.8\,(-2.9)$   & $2.4\,(-4.2)$   & $7.4\,(+3.0)$   & $0.8\,(-4.6)$   & $0.8\,(-1.0)$  \\
FULL                & $3.7$           & $6.5$           & $4.4$           & $5.4$           & $1.8$          \\
\midrule
\multicolumn{6}{l}{\emph{$f^{\rm var}$ \quad population variance}} \\
$\rm BIAS_v$        & $78.3\,(+68.6)$ & $70.7\,(+61.0)$ & $79.3\,(+69.7)$ & $74.9\,(+65.2)$ & $79.5\,(+69.9)$ \\
$\rm BIAS_{xf}$     & $25.2\,(+15.6)$ & $45.7\,(+36.0)$ & $28.1\,(+18.5)$ & $32.4\,(+22.7)$ & $10.1\,(+0.5)$  \\
$\rm BIAS_{top}$    & $23.3\,(+13.7)$ & $40.8\,(+31.0)$ & $34.4\,(+24.8)$ & $28.5\,(+18.9)$ & $9.8\,(+0.2)$   \\
$\rm RAND$          & $7.3\,(-2.3)$   & $7.4\,(-2.4)$   & $7.6\,(-2.0)$   & $7.3\,(-2.3)$   & $7.3\,(-2.2)$   \\
FULL                & $9.7$           & $9.8$           & $9.7$           & $9.7$           & $9.6$           \\
\midrule
\multicolumn{6}{l}{\emph{$f^{\rm gini}$ \quad absolute Gini mean difference}} \\
$\rm BIAS_v$        & $63.5\,(+57.8)$ & $60.8\,(+55.1)$ & $62.4\,(+56.7)$ & $62.2\,(+56.4)$ & $65.4\,(+59.7)$ \\
$\rm BIAS_{xf}$     & $18.6\,(+12.8)$ & $36.6\,(+30.8)$ & $21.0\,(+15.2)$ & $26.3\,(+20.5)$ & $7.3\,(+1.6)$   \\
$\rm BIAS_{top}$    & $18.4\,(+12.7)$ & $33.3\,(+27.6)$ & $25.4\,(+19.7)$ & $21.5\,(+15.8)$ & $6.9\,(+1.2)$   \\
$\rm RAND$          & $4.5\,(-1.2)$   & $4.5\,(-1.3)$   & $4.5\,(-1.2)$   & $4.5\,(-1.2)$   & $4.5\,(-1.2)$   \\
FULL                & $5.7$           & $5.8$           & $5.7$           & $5.7$           & $5.7$           \\
\bottomrule
\end{tabular}
\end{table}

\textbf{Full scale: the long tail dominates.}
On Mythos at $N = 1.67 \times 10^6$ under $f^{\rm heat}$, $G_{\rm tail}$ carries $76.4\%$ of the within-panel attribution, $G_{\rm mid}$ $19.9\%$, and $G_{\rm top}$ only $3.7\%$ (Table~\ref{tab:flip-main}, FULL row). Per-agent attribution is largest inside $G_{\rm top}$, but the $90\%$ of agents in $G_{\rm tail}$ jointly carry the majority of the total: in Figure~\ref{fig:overview}, each tail bin is individually pale, but the tail bins together cover most of the colored area.

\textbf{Biased small panel: the upper tiers take over.}
At $N = 10^2$ under the biased protocol, the same attribution gives a structurally opposite answer (Table~\ref{tab:flip-main}, $\rm BIAS_v$ row): on Mythos, $G_{\rm top}$ triples to $11.3\%$, $G_{\rm mid}$ nearly doubles to $36.7\%$, and $G_{\rm tail}$ collapses to $52.0\%$. The combined upper-tier share rises from $24\%$ at full scale to $48\%$, a $+24$\,pp shift, with $G_{\rm mid}$ contributing more than $G_{\rm top}$. The random control at the same $N$ instead recovers the population fractions $(1\%, 9\%, 90\%)$ on average,\footnote{The isolated $\rm RAND$ value of $7.4\%$ on The Masters is an artifact of $f^{\rm heat}$'s saturating product at $N = 10^2$, where one high-follower agent can dominate $H = m_a m_b m_c$; all other topics and value functions leave $\rm RAND$ at or slightly below the full-panel value, so the bias-vs-random gap remains the structural signal.} confirming that the upper-tier \emph{inflation} is specific to the biased convenience sample. Note that recovering the population fraction is not the same as recovering the full-scale attribution: at $N = 10^2$ the random panel's $0.8\%$ sits below the full-scale $3.7\%$, so random small panels undershoot the top tier while biased panels overshoot it, and only scale itself closes the gap (Appendix~\ref{app:numbers}).

\textbf{The flip is universal across topics and $f$.}
The pattern of Table~\ref{tab:flip-main} repeats across all five topics and all four value functions: $\rm BIAS_v$ inflates $R_{G_{\rm top}}^S$ above the full-panel value in $20/20$ cells, the milder biased protocols sit between $\rm BIAS_v$ and $\rm RAND$, and the random control produces no inflation, staying at or below the full-panel value. The linear baseline $f^{\rm lin}$ shifts $R_{G_{\rm top}}^S$ by $\sim 20$\,pp; the concentration-flavored $f^{\rm var}$ and $f^{\rm gini}$ amplify the shift to $55$--$70$\,pp (Appendix~\ref{app:topics}). The flip is therefore not exclusive to nonlinear $f$: it is already $\approx 20$\,pp under linearity. What nonlinearity adds is a roughly $3\times$ larger magnitude and the loss of the post-hoc-rescaling escape (Theorem~\ref{thm:asb}, Table~\ref{tab:theorem-test}).

\subsection{Per-topic temporal dynamics emerge only at full scale}
\label{sec:finding-universal}

The flip also concerns when each tier contributes. Figure~\ref{fig:overview} shows that the per-day decomposition takes a topic-specific shape on the full panel: the peak day, the dominant tier on that day, and the overall shape differ from topic to topic in ways that match the underlying real-world events.

The sports topic peaks on April~13 UTC, as discussion of The Masters Sunday
final round (April~12, US~Eastern) propagates through the firehose, with
$R_{G_{\rm top}} = 4.4\%$ on that day, four times the $1\%$ population share.
The entertainment topic peaks on April~19 UTC, after WrestleMania night one
(April~18, US~Pacific), with $R_{G_{\rm tail}} = 82.1\%$, the highest tail share of
any topic-day combination, consistent with fan reactions and amateur reposts
driving event-aftermath engagement. The society topic peaks on April~20, two
days before Earth Day, with the largest top-tier share of any topic peak
($R_{G_{\rm top}} = 8.0\%$), consistent with media and advocacy accounts leading
the run-up coverage. The political and technology topics show flatter,
multi-modal trajectories without a single dominant peak (peak-day numerics:
Table~\ref{tab:peak-days}, Appendix~\ref{app:topic-dynamics}). Across all five
topics the biased small panel would predict the same shape, ``the upper tier
drove it, every day,'' regardless of which event was actually responsible.

Conclusions of the form ``agents of type $A$ drove emergence $E$ on day $t$'' drawn from practitioner-realistic small-$N$ panels can thus disagree structurally with the same conclusions at the phenomenon's realistic scale. Section~\ref{sec:theory} shows why the disagreement is structural, and gives the linear-time attribution that makes full-scale evaluation possible.

\begin{figure}[!t]
\centering
\includegraphics[width=\linewidth]{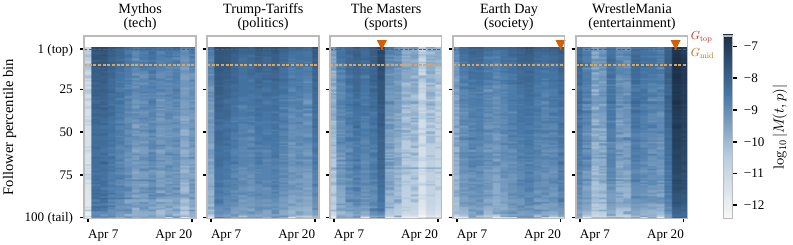}
\caption{\textbf{At full scale the long tail carries the attribution mass with topic-specific per-day structure.} Per-day attribution at full scale ($N = 1.67 \times 10^6$) on five Bluesky topics (April 7--20, 2026), $f = f^{\rm heat}$: rows are agents sorted by follower percentile (high at top, low at bottom), columns are days, and cell color is $\log_{10} |M(t,p)|$, the within-bin attribution mass (non-negative under $f^{\rm heat}$). Dashed lines: $G_{\rm top}$ ($1\%$) and $G_{\rm mid}$ ($10\%$) boundaries; triangles: event-peak days (Table~\ref{tab:peak-days}); bin-level construction of $M(t,p)$: Appendix~\ref{app:supercube}.}
\label{fig:overview}
\end{figure}

\section{Method: Attributing Emergence in Million-Agent Systems}
\label{sec:theory}
\label{sec:method}

Section~\ref{sec:finding} reports a qualitative flip in attribution as the sample size $N$ grows. A natural first reaction is that the flip is a measurement artifact: perhaps the two attributions differ by a single global scalar that one could divide out. This section shows that this hope is structurally limited: a global rescaling factor is guaranteed exactly when the macro indicator is linear over agents (Theorem~\ref{thm:asb}), and on our data reconciliation fails for every nonlinear indicator we measure (Table~\ref{tab:theorem-test}). We then derive linear-time Aumann--Shapley attributions for the four value functions of Section~\ref{sec:finding-setup}, removing the combinatorial cost of standard Shapley computation and letting the framework reach $N = 10^6$ in milliseconds.

\subsection{Aumann--Shapley path-integral attribution}
\label{sec:theory-setup}

Consider an LLM-powered MAS with $N$ agents and feature vectors $z_1, \dots, z_N \in \R^D$, where $z_i$ records what agent $i$ does and $D$ is the feature dimension. Let $[N] := \{1, \dots, N\}$, and let $f_n : \R^{n \times D} \to \R$ be a real-valued macro indicator evaluated on $n$ agents. The indicator may be evaluated on the full population $[N]$ or on a subset $S \subseteq [N]$ of size $n < N$, so we treat $f_n$ as a family $\{f_n\}_{n \geq 1}$ under two regularity assumptions.
\begin{assumption}[Twice differentiability]\label{as:f1}
Each $f_n$ is of class $C^2$ on $\R^{n \times D}$.
\end{assumption}
\begin{assumption}[Permutation invariance]\label{as:f2}
For every permutation $\pi$ of $[n]$, $f_n(z_{\pi(1)}, \dots, z_{\pi(n)}) = f_n(z_1, \dots, z_n)$.
\end{assumption}
Both assumptions hold for the macro indicators in standard use, including mean activity, variance, the Gini coefficient, cascade depth~\citep{goel2016structural}, and DeepSet value networks~\citep{zaheer2017deepsets, lee2019settransformer}, once non-smooth components such as $\max$ or indicator functions are replaced by their standard smooth surrogates.\footnote{$f^{\rm gini}$ is the one indicator we use in non-smooth form: $C^2$ except at ties, with a gradient that is rank-constant along the linear ray, so its Aumann--Shapley value is well-defined in closed form (Appendix~\ref{app:f-derivations}); we treat it as an empirical probe of the nonlinear regime rather than a literal instance of the $C^2$ hypothesis.}

To attribute the change in $f_n$ to individual agents we adopt the Aumann--Shapley value~\citep{aumann1974values} in its cost-sharing form~\citep{billera1982allocation}: each agent is a continuum of participation units faded in along the diagonal of the participation cube. It satisfies efficiency, symmetry, dummy, and linearity (Appendix~\ref{app:axioms}); on non-atomic games it is unique under the Aumann--Shapley axioms (dummy strengthened to positivity), and it coincides with discrete Shapley exactly when the participation game is multilinear~\citep{owen1972multilinear}, which holds for $f$ linear over agents but not in general. Computationally, the straight-path integral is Integrated Gradients~\citep{sundararajan2017axiomatic} with agents as features; the contribution of this section is the MAS instantiation, the closed forms that make it $O(N)$, and the cross-scale dichotomy below. The Aumann--Shapley value of agent $i$ is the integrated marginal effect of $f_n$ as agent $i$'s feature is faded in from a baseline $z^0 \in \R^D$ to its observed value $z_i$, along the straight path between them (Figure~\ref{fig:method}a).

\begin{definition}[Aumann--Shapley subset attribution]\label{def:as}
Fix $S \subseteq [N]$ with $|S| = n$. Let $z_S \in \R^{n \times D}$ collect the observed features on $S$ and let $z_S^0 \in \R^{n \times D}$ stack $n$ copies of the baseline $z^0$. The Aumann--Shapley attribution of agent $i \in S$ at scale $n$ is
\begin{equation}
\phi_i^S := \sum_{d=1}^{D} (z_{i,d} - z^0_d) \int_0^1 \frac{\partial f_n}{\partial z_{i,d}}\!\left(z_S^0 + \tau (z_S - z_S^0)\right) \mathrm{d}\tau \;\in\; \R,
\label{eq:as}
\end{equation}
where $z_{i,d}$ is the $d$-th coordinate of $z_i$ and $\tau \in [0,1]$ parameterizes the fade-in path. The normalized attribution is $\tilde{\phi}_i^S := \phi_i^S / \Delta v^S$ with $\Delta v^S := f_n(z_S) - f_n(z_S^0)$, the share of the macro change carried by agent $i$. We write $\tilde{\phi}_i := \tilde{\phi}_i^{[N]}$ for the share computed on the full population.
\end{definition}

When the data has a temporal dimension, we attribute per time step. Let $z_t := (z_{1,t}, \dots, z_{N,t}) \in \R^{N \times D}$ denote the agent state at time $t$, where $z_{i,t} \in \R^D$ is agent $i$'s feature vector at $t$. Setting $S = [N]$ in Definition~\ref{def:as} and indexing by $t$ gives the per-agent, per-step attribution
\begin{equation}
\phi_{i,t} := \sum_{d=1}^{D} (z_{i,t,d} - z^0_d) \int_0^1 \frac{\partial f}{\partial z_{i,d}}\!\left(z^0 + \tau (z_t - z^0)\right) \mathrm{d}\tau,
\label{eq:phi-it}
\end{equation}
where $f$ is the macro value function on $N$ agents, $z^0 \in \R^D$ is a baseline, and $T$ denotes the total number of time steps. By the Aumann--Shapley axioms (Appendix~\ref{app:axioms}), $\phi_{i,t}$ inherits efficiency, symmetry, dummy player, and linearity directly from Equation~\ref{eq:as}; the temporal index does not affect the guarantees. Aggregations follow by linearity (Figure~\ref{fig:method}b): the agent total $\Phi_i := \sum_{t=1}^T \phi_{i,t}$ answers \emph{who drove it}, the step total $\Phi_t := \sum_{i \in [N]} \phi_{i,t}$ answers \emph{when it turned}, and summing $\tilde\phi_i^S$ over $i \in S \cap G$ gives the tier share $R_G^S$ of Section~\ref{sec:finding}.

\subsection{Attribution Scaling Bias: a structural dichotomy}
\label{sec:theory-dichotomy}

The pair $(\tilde{\phi}_i^S, \tilde{\phi}_i)$ is what the post-processing hope concerns: a small-scale study reports the former, a million-agent study the latter, and the hope is a single scalar $c$, possibly depending on $S$ and on the configuration $z := (z_1, \dots, z_N) \in \R^{N \times D}$ but not on the agent index, with $\tilde{\phi}_i^S = c \cdot \tilde{\phi}_i$ for every $i \in S$. Whether such a $c$ exists depends on a single structural property of $f_n$.

\begin{definition}[Linear value function]\label{def:linear}
$f_n$ is \emph{linear} if there exists $\mu : \R^D \to \R$ of class $C^2$ such that
\[
f_n(z_1, \dots, z_n) = \frac{1}{n}\sum_{i=1}^n \mu(z_i).
\]
Linearity here is a property of how agents are aggregated, not of how individual features are processed: $\mu$ may itself be highly nonlinear.
\end{definition}

A linear indicator decomposes additively over agents, so when $f_n$ and $f_N$ share a generator $\mu$ the path integral in Equation~\ref{eq:as} collapses to $\mu(z_i) - \mu(z^0)$ and the two normalized attributions differ by an explicit agent-independent factor $c(S, z)$: the post-processing hope is realized exactly. Variance, the Gini coefficient, cascade depth, and DeepSet value networks all couple distinct agents and are nonlinear in the sense of Definition~\ref{def:linear}; mean activity, average sentiment, and aggregate engagement are linear.

\begin{theorem}[Attribution Scaling Bias]\label{thm:asb}
Let $\{f_n\}$ satisfy Assumptions~\ref{as:f1}--\ref{as:f2}. Fix $N > n \geq 2$.
\emph{(i) Linear case.} If $f_N$ and $f_n$ are both linear with a shared generator $\mu$ (Definition~\ref{def:linear}), then for every $S \subseteq [N]$ with $|S| = n$ and every $z \in \R^{N \times D}$ such that $\Delta v^S \neq 0$ and $\Delta v^{[N]} \neq 0$, there exists a scalar $c(S, z) \in \R$ depending on $S$ and $z$ but not on $i$ such that
\begin{equation}
\tilde{\phi}_i^S = c(S, z) \cdot \tilde{\phi}_i \quad \text{for every } i \in S.
\label{eq:reconcile}
\end{equation}
\emph{(ii) Nonlinear case.} The guarantee of part (i) does not extend beyond the linear class: there exist $C^2$, permutation-invariant nonlinear families $\{f_n\}$ (a minimal quadratic-interaction family, Equation~\ref{eq:counter-quad}, as well as the variance $f^{\rm var}$) together with features $z \in \R^{N \times D}$, a baseline $z^0 \in \R^D$, and a subset $S \subsetneq [N]$ with $|S| \geq 2$, satisfying $\Delta v^S \neq 0$ and $\Delta v^{[N]} \neq 0$, such that for every scalar $c \in \R$,
\begin{equation}
\max_{i \in S} \left| \tilde{\phi}_i^S - c \cdot \tilde{\phi}_i \right| > 0.
\label{eq:bias}
\end{equation}
\end{theorem}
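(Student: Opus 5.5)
For part (i), we evaluate the path integral in Equation~\ref{eq:as} explicitly. Suppose $f_n(z_S)=\frac1n\sum_{j\in S}\mu(z_j)$. Then $\partial f_n/\partial z_{i,d}$ depends only on $z_i$ and equals $\frac1n\,\partial_d\mu(z_i)$. Along the path, agent $i$'s coordinate is $z^0+\tau(z_i-z^0)$. By the fundamental theorem of calculus applied to $\tau\mapsto\mu(z^0+\tau(z_i-z^0))$, which is legitimate because $\mu\in C^2$, we get
\[
\phi_i^S=\tfrac1n\bigl(\mu(z_i)-\mu(z^0)\bigr),\qquad
\Delta v^S=\tfrac1n\sum_{j\in S}\bigl(\mu(z_j)-\mu(z^0)\bigr).
\]
Writing $\delta_i:=\mu(z_i)-\mu(z^0)$, this gives $\tilde\phi_i^S=\delta_i/\sum_{j\in S}\delta_j$. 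The same computation at scale $N$ gives $\tilde\phi_i=\delta_i/\sum_{j\in[N]}\delta_j$. Both denominators are nonzero by hypothesis, so we can set $c(S,z):=\sum_{j\in[N]}\delta_j/\sum_{j\in S}\delta_j$. This factor is independent of $i$, and Equation~\ref{eq:reconcile} holds for every $i\in S$, including agents with $\delta_i=0$. The $1/n$ and $1/N$ prefactors cancel in the normalization, so they play no role.

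For part (ii), it suffices to exhibit explicit counterexamples. The main observation is the following. Suppose $\tilde\phi^S_i=c\,\tilde\phi_i$ for all $i\in S$. Summing over $S$ and using efficiency, $\sum_{i\in S}\tilde\phi_i^S=1$, forces $c=1/\sum_{i\in S}\tilde\phi_i$. So a reconciling $c$ exists if and only if the vectors $(\tilde\phi_i^S)_{i\in S}$ and $(\tilde\phi_i)_{i\in S}$ are proportional. To prove Equation~\ref{eq:bias} for every $c$, it is therefore enough to find $i,j\in S$ with $\tilde\phi_i\neq0$ and
\[
\tilde\phi_i^S\,\tilde\phi_j\neq\tilde\phi_j^S\,\tilde\phi_i.
\]
This cross-ratio condition also covers degenerate $c$, such as $c=0$ or $\sum_{i\in S}\tilde\phi_i=0$, because the $\max$ is then positive whenever some $\tilde\phi^S_i\neq0$. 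We will check this case separately.

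For the quadratic family, take $D=1$, $z^0=0$, and
\[
f_n(z)=\frac1n\sum_i z_i+\frac{\lambda}{n^2}\Big(\sum_i z_i\Big)^2 .
\]
This is $C^2$, symmetric, and not of the form in Definition~\ref{def:linear} when $\lambda\neq0$. A direct computation of the path integral gives
\[
\phi_i^S=z_i\Big(\tfrac1n+\tfrac{\lambda}{n^2}\textstyle\sum_{S}z\Big).
\]
Taken alone, this is still proportional across agents, so a purely mean-field quadratic is \emph{not} a counterexample. To break proportionality we must couple agents asymmetrically in the gradient. We will therefore use
\[
f_n(z)=\frac{1}{n}\sum_i z_i^{(1)}\cdot\frac1n\sum_i z_i^{(2)}\qquad (D=2),
\]
for which
\[
\phi_i^S=\tfrac12\bigl(z^{(1)}_i m_2(S)+z^{(2)}_i m_1(S)\bigr)/n .
\]
The ratio $\phi_i/\phi_j$ then depends on the subset means $m_1(S),m_2(S)$ versus $m_1([N]),m_2([N])$. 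A concrete instance is $N=3$, $n=2$, with feature values chosen so the means differ. We then verify the cross-ratio inequality numerically.

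For $f^{\rm var}$, take $D=1$, $z^0=0$. The path is a scaling $\tau z$, so $f(\tau z)=\tau^2 f(z)$, and integration gives $\phi_i^S=\frac1n(z_i-\bar z_S)z_i$ with $\Delta v^S=f^{\rm var}(z_S)$. Choose $z=(0,1,2,3)$ and $S=\{1,2\}$, i.e.\ values $\{0,1\}$. The agent with value $0$ gets share $0$ in both cases. The agent with value $1$ has full-panel share proportional to $1\cdot(1-1.5)<0$ but positive share on $S$. Efficiency forces $c=1/\tilde\phi_2$, while the zero agent is fine, so we choose instead $S=\{2,3\}$ to get a strict cross-ratio violation that is easily checked. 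The main obstacle is choosing the nonlinear example so that its attributions are not accidentally proportional, as the mean-field quadratic shows. The plan is to verify the cross-ratio condition explicitly on small numeric instances.
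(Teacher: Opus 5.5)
\textbf{There is a genuine gap: part (ii) never produces a valid witness.} Your part (i) and your cross-ratio reduction are correct and match the paper, whose argument is exactly that two agents of $S$ demand different values of $c$.

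\textbf{The variance witness reconciles instead of failing.} Your final configuration is $z=(0,1,2,3)$ with $S=\{2,3\}$, i.e.\ values $1,2$. There $\bar z_S=\bar z=3/2$, so $\phi_i^S=z_i(z_i-\tfrac32)/2$ and $\phi_i=z_i(z_i-\tfrac32)/4$ are exactly proportional. This gives $\tilde\phi^S=(-1,2)=10\cdot(-\tfrac1{10},\tfrac15)=10\,\tilde\phi|_S$, so the cross-ratio you propose to check is an equality. Your first choice $S=\{1,2\}$ failed for the other degenerate reason: the zero entry makes $z_S$ parallel to $v$, where $v_i=z_i(z_i-\bar z)$. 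These are precisely the two cases $m_S=m_N$ and $g_S\in\mathrm{span}\{v\}$ in which Proposition~\ref{prop:residual} gives zero residual. You need a subset avoiding both. One option is $S=\{3,4\}$ (values $2,3$): $\tilde\phi^S=(-2,3)$ against $\tilde\phi|_S=(\tfrac15,\tfrac9{10})$ demands $c=-10$ versus $c=\tfrac{10}{3}$. Another is the paper's $(z_1,z_2,z_3)=(1,1,2)$ with $S=\{1,3\}$: $\tilde\phi^S=(-1,2)$ against $\tilde\phi|_S=(-\tfrac12,2)$ demands $c=2$ versus $c=1$.

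\textbf{The quadratic witness is missing, and your selection rule is insufficient.} Your $D=2$ product family is a legitimate nonlinear witness, but you give no instance, and ``means differ'' does not guarantee failure. Write $x,y$ for the two coordinates and $X_S=\sum_{i\in S}x_i$. Then $\tilde\phi_i^S=\tfrac12(x_i/X_S+y_i/Y_S)$. When $x_S,y_S$ are linearly independent, reconciliation fails exactly when $X_{[N]}/X_S\neq Y_{[N]}/Y_S$. For example, agents $(1,0),(0,1),(1,1)$ with $S=\{1,2\}$ have shifted means yet reconcile with $c=2$. By contrast, $(1,0),(1,1),(0,1)$ with $S=\{1,2\}$ give $\tilde\phi^S=(\tfrac14,\tfrac34)$ against $\tilde\phi|_S=(\tfrac14,\tfrac12)$, which fails.

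\textbf{The statement names a specific family that you do not cover.} Equation~\ref{eq:counter-quad} stays at $D=1$: $f_n=n^{-2}\sum_{i<j}z_iz_j$. Removing the diagonal from your mean-field square makes the gradient $n^{-2}\sum_{k\neq i}z_k$ agent-dependent, which is exactly the asymmetry you found missing. With $(1,1,2)$ and $S=\{1,3\}$ it gives $\tilde\phi=(\tfrac3{10},\tfrac3{10},\tfrac4{10})$ versus $\tilde\phi^S=(\tfrac12,\tfrac12)$, demanding $c=\tfrac53$ versus $\tfrac54$.

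Your mean-field observation is itself correct and worth keeping. Any $D=1$ family $h_n(\sum_i z_i)$ reconciles for every configuration and baseline, so part (ii) can only be an existence statement.
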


The proof is in Appendix~\ref{app:proofs}. Part (i) says that under linearity a single rescaling factor always works; part (ii) says that this guarantee \emph{characterizes} the linear class, with explicit failing configurations already at $N = 3$. Part (ii) exhibits failing configurations for specific nonlinear families rather than asserting failure for every nonlinear family; for the three nonlinear indicators used in this paper, the residuals in Table~\ref{tab:theorem-test} certify that the failure is realized on the actual data.

\begin{table}[!t]
\caption{Residual $\varepsilon$ of the optimal agent-independent rescaling between $\tilde\phi^S$ at $N = 10^2$ visibility-biased and $\tilde\phi$ at the full panel, by topic and value function. Mean over ten subset seeds per cell.}
\label{tab:theorem-test}
\centering
\small
\setlength{\tabcolsep}{8pt}
\begin{tabular}{lcccc}
\toprule
Topic & $\varepsilon^{\rm lin}$ ($\times 10^{-8}$) & $\varepsilon^{\rm heat}$ & $\varepsilon^{\rm var}$ & $\varepsilon^{\rm gini}$ \\
\midrule
Mythos              & $3.4$ & $0.13$ & $0.89$ & $0.97$ \\
Trump-Tariffs       & $3.4$ & $0.13$ & $0.91$ & $0.97$ \\
The Masters         & $3.4$ & $0.30$ & $0.92$ & $0.98$ \\
Earth Day-Climate   & $3.5$ & $0.13$ & $0.89$ & $0.97$ \\
WrestleMania        & $3.3$ & $0.10$ & $0.91$ & $0.97$ \\
\bottomrule
\end{tabular}
\end{table}

\textbf{Scope.} Theorem~\ref{thm:asb} is a structural dichotomy: it does not bound the empirical mismatch on any particular dataset, nor claim the mismatch is monotone in nonlinearity. The empirical magnitude of the flip in Section~\ref{sec:finding} is dominated by the sampling structure of the small panel, not by the form of $f$: a flip of about $20$ percentage points already appears under the linear baseline $f^{\rm lin}$ (Appendix~\ref{app:topics}), where part (i) certifies that a single scalar would in principle reconcile the two scales. The role of the theorem is to remove the guarantee of such a generic repair once $f$ leaves the linear class, the regime occupied by the macro indicators researchers actually study.

\textbf{Empirical realization.} Fixing a subset $S$ and the per-agent shares restricted to it, the optimal scalar is $c^* := \arg\min_{c \in \R} \|\tilde\phi^S - c\, \tilde\phi\|_2^2$, with relative residual $\varepsilon := \|\tilde\phi^S - c^*\, \tilde\phi\|_2 / \|\tilde\phi^S\|_2$. Part (i) predicts $\varepsilon = 0$ for linear $f$; part (ii) removes the guarantee otherwise. On all five topics at $N = 10^2$ under the visibility-biased protocol (ten seeds each), $\varepsilon^{\rm lin}$ stays at floating-point precision while all three nonlinear $f$ give $\varepsilon$ uniformly positive (Table~\ref{tab:theorem-test}). The residual is primarily a property of $f$: $\varepsilon^{\rm var} \approx 0.89$--$0.92$, $\varepsilon^{\rm gini} \approx 0.97$--$0.98$ (cross-topic spread below $0.03$), and $\varepsilon^{\rm heat} \approx 0.10$--$0.13$ on four topics, with The Masters at $0.30$, the same event-concentrated cell behind the $\rm RAND$ anomaly of Table~\ref{tab:flip-main}. The two cases of the theorem appear in the same dataset under the same attribution method, separated only by the linearity of $f$ (test setup and a rank-correlation view: Appendix~\ref{app:theorem-test}). For $f^{\rm var}$ the residual is moreover exactly characterized: $\varepsilon = |m_S - m_N| \cdot \|P_v^\perp g_S\| / \|u\|$, the sine of the angle induced by the sampled panel's mean shift (Proposition~\ref{prop:residual}, Appendix~\ref{app:residual-law}). This law reproduces every measured $f^{\rm var}$ residual to $10^{-8}$ and explains the protocol ordering quantitatively: biased pools carry a fixed mean shift, so $\varepsilon \approx 0.9$ at every within-pool panel size, while random panels have shift $O_p(n^{-1/2})$ and their measured $\varepsilon$ decays from $0.046$ at $N = 10^2$ to $0.0016$ at $N = 10^5$.

\subsection{Aumann--Shapley attribution scales linearly to a million agents}
\label{sec:method-compute}
\label{sec:method-autograd}
\label{sec:method-baseline}

For each of the four value functions of Section~\ref{sec:finding-setup}, the integral in Definition~\ref{def:as} reduces to an explicit per-agent expression: $f^{\rm lin}$ has a constant integrand, $f^{\rm var}$ has an integrand linear in $\tau$, $f^{\rm gini}$ has an integrand that is rank-determined and so constant along the linear ray, and $f^{\rm heat}$ has a rational integrand whose $\tau$-integral is $\log(1+H)/(3H)$ with $H := m_a(S)\,m_b(S)\,m_c(S)$. The resulting expressions (Equations~\ref{eq:phi-lin}--\ref{eq:phi-gini}, Appendix~\ref{app:f-derivations}) are used throughout the main-text experiments. We verify them against $K = 300$-step numerical path integration with \texttt{torch.autograd}: per-agent MAE below $10^{-7}$, Spearman $\rho = 1.0$, efficiency violation below $10^{-12}$ across $N \in \{10, 10^2, \dots, 10^5\}$ (Table~\ref{tab:synthetic-accuracy}, Appendices~\ref{app:axioms} and~\ref{app:f-derivations}).

\textbf{General-$f$ fallback.} For value functions outside the four analytic cases above, we approximate Equation~\ref{eq:phi-it} by midpoint discretization:
\begin{equation}
\phi_{i,t} \approx \sum_{d=1}^{D} (z_{i,t,d} - z^0_d) \cdot \frac{1}{K} \sum_{k=1}^{K} \frac{\partial f}{\partial z_{i,d}}\!\left(z^0 + \frac{k - 1/2}{K}(z_t - z^0)\right).
\label{eq:phi-discrete}
\end{equation}
Each integration point costs one $\texttt{autograd}$ backward pass on $f$, which yields gradients for every agent and coordinate at the price of a single forward evaluation, so attribution across $T$ steps costs $O(T \cdot K \cdot C_f)$ and the combinatorial $2^N$ explosion of coalition-based methods is gone. The midpoint rule converges at the $O(1/K^2)$ quadrature rate; convergence numerics and the default $K = 30$ are in Appendix~\ref{app:integration}.

\begin{wrapfigure}{r}{0.46\linewidth}
\centering
\vspace{-1.2\baselineskip}
\includegraphics[width=\linewidth]{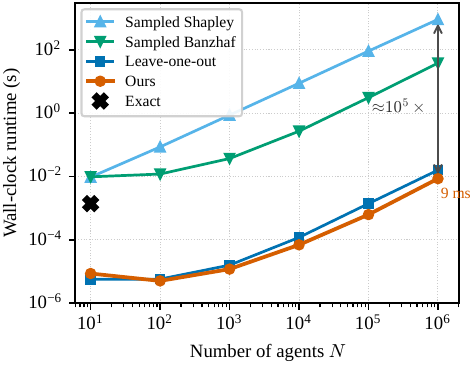}
\caption{\textbf{Attribution scales linearly in $N$, reaching $10^6$ agents in $9$\,ms.} Wall-clock (log--log) on Mythos, $f^{\rm heat}$. Sampled estimators use $m{=}1000$; exact methods are infeasible at $N \geq 10^2$.}
\label{fig:runtime}
\vspace{-1.0\baselineskip}
\end{wrapfigure}

\textbf{Empirical scaling.} On a single CPU node (Apple M4) our attribution at $N = 10^6$ on the Bluesky panel takes $\sim 9$\,ms for $f^{\rm heat}$ (other $f$ scale similarly; $f^{\rm gini}$ carries an extra $O(N \log N)$ sorting term), versus $\sim 9 \times 10^{2}$\,s for sampled Shapley with $m = 1000$ samples on the same hardware (measured directly at $N = 10^6$; cf.\ the $91$\,s at $N = 10^5$), a gap of three to five orders of magnitude across $N \in \{10, 10^2, 10^6\}$ (Figure~\ref{fig:runtime}); leave-one-out matches the scaling but keeps only three of the four axioms, and exact Shapley/Banzhaf are infeasible beyond $N \approx 100$ (full six-method wall-clock table: Table~\ref{tab:wallclock}, Appendix~\ref{app:synthetic}).

\textbf{Baseline choice.} We use the zero baseline $z^0 = \mathbf{0}$ throughout; the population-mean baseline $z^0 = \bar{z}$ gives an essentially identical agent ranking (top-$10$ Jaccard $1.00$, Kendall $\tau \approx 1.00$), and a baseline robustness sweep is in Appendix~\ref{app:baselines}, with path-choice and value-function-sensitivity sweeps consolidated in Appendix~\ref{app:robustness}.

\section{Conclusion}
\label{sec:conclusion}

Emergence attribution in LLM-powered MAS disagrees structurally between visibility-biased small panels and the full system, and only linear macro indicators guarantee a global-rescaling repair; the guarantee provably fails outside the linear class and empirically fails on our panel for all three nonlinear indicators (Theorem~\ref{thm:asb}, Table~\ref{tab:theorem-test}). The Aumann--Shapley framework behind the study keeps all four axioms, reaches a million agents in milliseconds, agrees with sampled Shapley where the latter is feasible (Appendix~\ref{app:llmmas}), and extends to a learned value network at the full panel (Appendix~\ref{app:deepset}).

\textbf{Limitations.} Theorem~\ref{thm:asb} is qualitative except under $f^{\rm var}$ (Proposition~\ref{prop:residual}), and the empirical magnitude is dominated by the panel's sampling structure rather than by $f$ (Section~\ref{sec:theory-dichotomy}). The result concerns biased convenience samples, not small-$N$ inference in general (more in Appendix~\ref{app:limitations}).

\bibliographystyle{plainnat}
\bibliography{references}

\newpage
\appendix

\section{Proof of Theorem~\ref{thm:asb} and Hessian characterization}
\label{app:proofs}

This appendix gives the full proof of Theorem~\ref{thm:asb}. Section~\ref{app:proofs-hessian} establishes a Hessian characterization of linearity that the proof relies on. Section~\ref{app:proofs-linear} proves part (i) of the theorem (linear reconciliation). Section~\ref{app:proofs-nonlinear} proves part (ii) by an explicit counterexample of minimal size $N = 3$.

Throughout, we work under Assumptions~\ref{as:f1} ($f_n$ is $C^2$) and~\ref{as:f2} (permutation invariance), and use the notation of Section~\ref{sec:theory-setup}: $[N] = \{1, \dots, N\}$, $S \subseteq [N]$ with $|S| = n$, $z_i \in \R^D$, $z^0 \in \R^D$, $\delta_i := z_i - z^0$, $z_S^0$ stacks $n$ copies of $z^0$, and $\Delta v^S := f_n(z_S) - f_n(z_S^0)$.

\subsection{Hessian characterization of linearity}
\label{app:proofs-hessian}

\begin{lemma}[Hessian characterization]\label{lem:hessian}
Fix $n \geq 2$ and let $f_n : \R^{n \times D} \to \R$ satisfy Assumptions~\ref{as:f1}--\ref{as:f2}. Then $f_n$ is linear in the sense of Definition~\ref{def:linear} if and only if
\begin{equation}
\frac{\partial^2 f_n}{\partial z_{i,d}\, \partial z_{j,d'}}(z) \equiv 0 \quad \text{for all } i \neq j \in [n],\; d, d' \in [D],\; z \in \R^{n \times D}.
\label{eq:hessian-block}
\end{equation}
\end{lemma}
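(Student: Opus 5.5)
The forward direction is immediate from the definition. If $f_n(z)=\frac1n\sum_k \mu(z_k)$ with $\mu\in C^2$, then $\partial f_n/\partial z_{i,d}=\frac1n\,\partial_d\mu(z_i)$. This depends on $z_i$ alone, so differentiating it with respect to $z_{j,d'}$ for $j\neq i$ gives zero identically, which is \eqref{eq:hessian-block}.

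For the converse, assume the off-diagonal Hessian blocks vanish everywhere. We integrate this condition in two stages.

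\textbf{Stage 1: additive separability.} Fix $i$. The gradient block $g_i(z):=\nabla_{z_i} f_n(z)\in\R^D$ is $C^1$ and has zero derivative in every $z_j$ with $j\neq i$. Since $\R^{(n-1)\times D}$ is convex, hence connected, $g_i$ depends only on $z_i$; write $g_i(z)=h_i(z_i)$ with $h_i$ of class $C^1$. Now integrate $f_n$ along the straight segment from $\mathbf 0$ to $z$, or equivalently change one block at a time:
\[
f_n(z)=f_n(\mathbf 0)+\sum_{i=1}^n\int_0^1 h_i(t z_i)\cdot z_i\,\mathrm dt .
\]
Setting $\psi_i(x):=\int_0^1 h_i(tx)\cdot x\,\mathrm dt$, this reads $f_n(z)=f_n(\mathbf 0)+\sum_i\psi_i(z_i)$. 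Moreover $\psi_i(x)=f_n(0,\dots,x,\dots,0)-f_n(\mathbf 0)$, with $x$ in slot $i$, so each $\psi_i$ is $C^2$ because $f_n$ is.

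\textbf{Stage 2: symmetrization.} Permutation invariance (Assumption~\ref{as:f2}) shows that $f_n(0,\dots,x,\dots,0)$ does not depend on which slot holds $x$. Hence $\psi_i=\psi$ for a single function $\psi$, and $\psi(0)=0$. Define
\[
\mu(x):=n\,\psi(x)+f_n(\mathbf 0).
\]
Then $\frac1n\sum_i\mu(z_i)=\sum_i\psi(z_i)+f_n(\mathbf 0)=f_n(z)$, and $\mu$ is $C^2$. So $f_n$ is linear in the sense of Definition~\ref{def:linear}.

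\textbf{Main obstacle.} The only delicate step is Stage 1: passing from vanishing mixed partials to the conclusion that $f_n$ is a sum of single-block functions. I would make this rigorous by the one-coordinate-block-at-a-time telescoping. Write $f_n(z)-f_n(\mathbf 0)$ as a sum over $i$ of the increments obtained by switching block $i$ from $0$ to $z_i$ while the blocks already switched are held fixed. Vanishing cross-derivatives show that the increment at step $i$ does not depend on those fixed blocks, so each increment equals $\psi_i(z_i)$.

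Two further points need checking. First, the constant $f_n(\mathbf 0)$ must be absorbed into $\mu$, which the $\frac1n$ normalization allows, as shown above. Second, the argument needs $f_n$ to be defined on the whole space $\R^{n\times D}$, which Assumption~\ref{as:f1} provides. With both in place the lemma follows.
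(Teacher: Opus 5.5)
Your proof is correct, and it differs from the paper's in how the single-slot potential is obtained. Both arguments begin the same way: each gradient block $\nabla_{z_i} f_n$ depends only on $z_i$. From there the paper uses Schwarz's theorem and the Poincar\'e lemma on $\R^D$ to get abstract potentials $\Psi_i$ with $\nabla\Psi_i = \xi_i$. It then differentiates the swapped identity $f_n(z) = f_n(z^{(i,j)})$ to show the gradient maps coincide, fixes one normalized $\Psi$, and only then telescopes slot by slot.

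You skip the Poincar\'e step. Integrating along the straight segment from $\mathbf 0$, or telescoping one block at a time, gives the explicit potential $\psi_i(x) = f_n(0,\dots,x,\dots,0) - f_n(\mathbf 0)$, whose $C^2$ regularity comes directly from $f_n$. You then apply permutation invariance to function values at single-slot configurations. That removes the need to differentiate the permutation identity or to choose integration constants.

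Your route is shorter and more elementary. It also shows that the paper's step (a) is not needed, because the fundamental-theorem-of-calculus telescoping in its step (c) already constructs the potential. It ties neatly to the paper's method as well: your $\psi(z_i)$ is exactly the zero-baseline straight-path attribution of agent $i$, matching $(\mu(z_i)-\mu(z^0))/n$ in part (i) of Theorem~\ref{thm:asb}. In exchange, the paper's route foregrounds that each slot's gradient is a conservative vector field. That is the natural framing if you view linearity over agents as an integrability condition on the per-slot gradients.
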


\begin{proof}
$(\Rightarrow)$ If $f_n(z) = \tfrac{1}{n} \sum_{i=1}^n \mu(z_i)$ for some $\mu \in C^2(\R^D)$, then $\partial f_n / \partial z_{i,d}(z) = (1/n) (\partial_d \mu)(z_i)$ depends only on $z_i$. Differentiating with respect to $z_{j,d'}$ for $j \neq i$ gives zero, so~\eqref{eq:hessian-block} holds.

$(\Leftarrow)$ Assume~\eqref{eq:hessian-block}. By~\eqref{eq:hessian-block}, for each fixed $i$ the gradient $\nabla_{z_i} f_n(z)$ depends on $z$ only through the slot $z_i$; write $\nabla_{z_i} f_n(z) = \xi_i(z_i)$ with $\xi_i : \R^D \to \R^D$.

\emph{(a) Each $\xi_i$ is a gradient field.} Because $f_n \in C^2$, Schwarz's theorem yields $\partial_{d'} (\xi_i)_d = \partial_d (\xi_i)_{d'}$, i.e.\ $J_{\xi_i}$ is symmetric on $\R^D$. By the Poincar\'e lemma on the simply connected domain $\R^D$, there exists $\Psi_i \in C^2(\R^D)$ with $\xi_i = \nabla \Psi_i$.

\emph{(b) All $\xi_i$ coincide.} Permutation invariance (Assumption~\ref{as:f2}) gives $f_n(z) = f_n(z^{(i,j)})$, where $z^{(i,j)}$ is the configuration with positions $i$ and $j$ swapped. Differentiating both sides with respect to $z_i$ and using~\eqref{eq:hessian-block} (so that $\xi_i$ and $\xi_j$ each depend only on the value sitting in their respective slots) yields $\xi_i(y) = \xi_j(y)$ for all $y \in \R^D$. Write the common map as $\xi$ and select potentials $\Psi_i$ all equal to a single $\Psi \in C^2(\R^D)$ with $\Psi(z^\star) = 0$ for some fixed $z^\star \in \R^D$.

\emph{(c) Integrate slot by slot.} Starting from $(z^\star, \dots, z^\star)$, fade each coordinate $i$ in turn from $z^\star$ to $z_i$. By (a) and (b), each segment depends only on $z_i$ via $\xi$, so by the fundamental theorem of calculus
\[
f_n(z_1, \dots, z_i, z^\star, \dots, z^\star) - f_n(z_1, \dots, z_{i-1}, z^\star, \dots, z^\star) = \Psi(z_i) - \Psi(z^\star) = \Psi(z_i).
\]
Telescoping over $i = 1, \dots, n$ gives $f_n(z) - f_n(z^\star, \dots, z^\star) = \sum_{i=1}^n \Psi(z_i)$. Setting $\mu(y) := n \,\Psi(y) + f_n(z^\star, \dots, z^\star)$ rearranges this to $f_n(z) = (1/n) \sum_i \mu(y_i)$, recovering Definition~\ref{def:linear}. \qedhere
\end{proof}

\subsection{Proof of Theorem~\ref{thm:asb} (i): linear reconciliation}
\label{app:proofs-linear}

Assume $f_N$ and $f_n$ are both linear with shared generator $\mu$, so
\[
f_n(z_S) = \frac{1}{n} \sum_{j \in S} \mu(z_j), \qquad f_N(z) = \frac{1}{N} \sum_{j \in [N]} \mu(z_j).
\]
For $i \in S$, $\partial f_n / \partial z_{i,d}(z_S) = (1/n)(\partial_d \mu)(z_i)$, and this expression depends only on $z_i$. Substituting into Definition~\ref{def:as},
\[
\phi_i^S = \sum_d \delta_{i,d} \int_0^1 \frac{1}{n} (\partial_d \mu)(z^0 + \tau \delta_i) \,\mathrm{d}\tau = \frac{1}{n} \int_0^1 \frac{\mathrm{d}}{\mathrm{d}\tau} \mu(z^0 + \tau \delta_i) \,\mathrm{d}\tau = \frac{\mu(z_i) - \mu(z^0)}{n}.
\]
The same calculation at scale $N$ gives $\phi_i = (\mu(z_i) - \mu(z^0)) / N$ for $i \in [N]$. Summing the first identity over $i \in S$ and the second over $i \in [N]$:
\[
\Delta v^S = \frac{1}{n} \sum_{j \in S} [\mu(z_j) - \mu(z^0)], \qquad \Delta v^{[N]} = \frac{1}{N} \sum_{j \in [N]} [\mu(z_j) - \mu(z^0)].
\]
Both are non-zero by hypothesis, so the normalized attributions are well-defined:
\[
\tilde\phi_i^S = \frac{\mu(z_i) - \mu(z^0)}{\sum_{j \in S} [\mu(z_j) - \mu(z^0)]}, \qquad \tilde\phi_i = \frac{\mu(z_i) - \mu(z^0)}{\sum_{j \in [N]} [\mu(z_j) - \mu(z^0)]}.
\]
Taking the ratio,
\begin{equation}
\tilde\phi_i^S = c(S, z) \cdot \tilde\phi_i, \qquad c(S, z) = \frac{\sum_{j \in [N]} [\mu(z_j) - \mu(z^0)]}{\sum_{j \in S} [\mu(z_j) - \mu(z^0)]},
\label{eq:linear-c}
\end{equation}
which depends on $S$ and $z$ but not on $i \in S$. This is exactly~\eqref{eq:reconcile}. \qed

\subsection{Proof of Theorem~\ref{thm:asb} (ii): explicit counterexample}
\label{app:proofs-nonlinear}

Take $D = 1$ and $z^0 = 0$. Define
\begin{equation}
f_n(z_1, \dots, z_n) := \frac{1}{n^2} \sum_{1 \leq i < j \leq n} z_i z_j.
\label{eq:counter-quad}
\end{equation}
This $f_n$ is $C^2$ and permutation-invariant for every $n \geq 2$. For $i \neq j$,
\[
\frac{\partial^2 f_n}{\partial z_i\, \partial z_j} = \frac{1}{n^2} \neq 0,
\]
so by Lemma~\ref{lem:hessian} every $f_n$ in the family is nonlinear, satisfying the hypothesis of part (ii) for all relevant scales.

\textbf{Per-agent attribution.} Using $f_n(z) = (1/(2n^2))[(\sum_k z_k)^2 - \sum_k z_k^2]$,
\[
\frac{\partial f_n}{\partial z_i}(z) = \frac{1}{n^2} \sum_{k \neq i} z_k.
\]
Along the path $\tau z$ (since $z^0 = 0$),
\begin{equation}
\phi_i^S = z_i \int_0^1 \frac{\tau}{n^2} \sum_{\substack{k \in S \\ k \neq i}} z_k \,\mathrm{d}\tau = \frac{z_i}{2 n^2} \sum_{\substack{k \in S \\ k \neq i}} z_k, \quad n = |S|.
\label{eq:counter-phi}
\end{equation}
Note that $\phi_i^S$ depends on $S$ through the sum $\sum_{k \in S, k \neq i} z_k$, not only through $|S|$. This is the direct consequence of the non-vanishing off-diagonal Hessian.

\textbf{Numerical instantiation at $N = 3$.} Take $(z_1, z_2, z_3) = (1, 1, 2)$ and $S = \{1, 3\}$.

\emph{Full population.} Using~\eqref{eq:counter-phi} with $n = 3$:
\[
\phi_1 = \tfrac{1}{18}(1 + 2) = \tfrac{1}{6}, \quad \phi_2 = \tfrac{1}{18}(1 + 2) = \tfrac{1}{6}, \quad \phi_3 = \tfrac{2}{18}(1 + 1) = \tfrac{2}{9}.
\]
$\Delta v^{[3]} = (1/9)(1 \cdot 1 + 1 \cdot 2 + 1 \cdot 2) = 5/9$. Efficiency check: $1/6 + 1/6 + 2/9 = (3 + 3 + 4)/18 = 5/9$. Hence
\[
\tilde\phi_1 = \tilde\phi_2 = \frac{1/6}{5/9} = \frac{3}{10}, \qquad \tilde\phi_3 = \frac{2/9}{5/9} = \frac{4}{10}.
\]

\emph{Subset $S = \{1, 3\}$.} Using~\eqref{eq:counter-phi} with $n = 2$:
\[
\phi_1^S = \tfrac{1}{8} \cdot 2 = \tfrac{1}{4}, \quad \phi_3^S = \tfrac{2}{8} \cdot 1 = \tfrac{1}{4}.
\]
$\Delta v^S = (1/4)(1 \cdot 2) = 1/2$. Efficiency check: $1/4 + 1/4 = 1/2$. Hence
$\tilde\phi_1^S = \tilde\phi_3^S = 1/2$.

\emph{Failure of any agent-independent rescaling.} Suppose $c \in \R$ satisfies $\tilde\phi_i^S = c \cdot \tilde\phi_i$ for $i \in S$. From $i = 1$: $c = (1/2)/(3/10) = 5/3$. From $i = 3$: $c = (1/2)/(4/10) = 5/4$. These are unequal, so no such $c$ exists. Both $\Delta v^S = 1/2 \neq 0$ and $\Delta v^{[3]} = 5/9 \neq 0$, so the normalized attributions are well-defined. Hence \eqref{eq:bias} holds. \qed

\textbf{Remark on minimality.} The counterexample uses $N = 3$, which is minimal: at $N = 2$, the only proper non-empty subset $S \subsetneq [2]$ has size $1$, forcing $\tilde\phi_i^S = 1$ trivially for the unique $i \in S$, so the conclusion of (ii) is vacuous. Three agents is the smallest $N$ for which a nonlinear off-diagonal Hessian can manifest as a multi-agent reconciliation failure.

\textbf{Remark ($f^{\rm var}$ admits the same construction).} The variance indicator fails identically on the same configuration. Take $D = 1$, $z^0 = 0$, $(z_1, z_2, z_3) = (1, 1, 2)$, $S = \{1, 3\}$, and $f^{\rm var}_n(z) = \tfrac{1}{n}\sum_i (z_i - \bar z)^2$ with closed-form attribution $\phi_i = z_i (z_i - \bar z)/n$ (Equation~\ref{eq:phi-var}). At full scale, $\bar z = 4/3$: $\phi_1 = \phi_2 = -1/9$, $\phi_3 = 4/9$, $\Delta v^{[3]} = 2/9$, so $\tilde\phi_1 = -1/2$, $\tilde\phi_3 = 2$. On $S$, $\bar z_S = 3/2$: $\phi_1^S = -1/4$, $\phi_3^S = 1/2$, $\Delta v^S = 1/4$, so $\tilde\phi_1^S = -1$, $\tilde\phi_3^S = 2$. Agent $1$ demands $c = 2$; agent $3$ demands $c = 1$. No agent-independent scalar exists. (This example also illustrates the signed shares discussed in Section~\ref{sec:finding-setup}: normalized attributions under $f^{\rm var}$ can be negative or exceed $1$.)

\subsection{An exact residual law for the variance indicator}
\label{app:residual-law}

For $f^{\rm var}$ the dichotomy of Theorem~\ref{thm:asb} sharpens into a closed-form expression for the residual itself, which identifies the sampled panel's mean shift as the exact source of irreparability.

\begin{proposition}[Exact residual law for $f^{\rm var}$]\label{prop:residual}
Fix $S \subseteq [N]$ with $|S| = n \geq 2$, let $g_i$ be the additive composite of Section~\ref{sec:finding-setup}, and write $m_S := \tfrac{1}{n}\sum_{i \in S} g_i$, $m_N := \tfrac{1}{N}\sum_{i \in [N]} g_i$, and, as vectors in $\R^S$, $u_i := g_i (g_i - m_S)$ and $v_i := g_i (g_i - m_N)$. Assume $\Delta v^S \neq 0$ and $\Delta v^{[N]} \neq 0$. Then the best-scalar residual of Section~\ref{sec:theory-dichotomy} under $f^{\rm var}$ equals
\begin{equation}
\varepsilon \;=\; \sin \angle(u, v) \;=\; \frac{|m_S - m_N| \cdot \| P_{v}^{\perp} g_S \|_2}{\| u \|_2},
\label{eq:residual-law}
\end{equation}
where $g_S := (g_i)_{i \in S}$ and $P_{v}^{\perp}$ projects onto the orthogonal complement of $\mathrm{span}\{v\}$ in $\R^S$. In particular, $\varepsilon = 0$ if and only if $m_S = m_N$ or $g_S \in \mathrm{span}\{v\}$.
\end{proposition}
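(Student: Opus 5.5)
The plan is to reduce everything to the closed-form $f^{\rm var}$ attribution and then do elementary Euclidean geometry in $\R^S$.

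\textbf{Step 1: closed form of the attributions.} First I would recall, or rederive, the closed form behind Equation~\ref{eq:phi-var} under the zero baseline $z^0=\mathbf 0$. The indicator $f^{\rm var}_n$ depends on $z_S$ only through the composites $g_i=a_i+b_i+c_i$, and
\[
\partial f^{\rm var}_n/\partial z_{i,d}=\tfrac{2}{n}(g_i-m_S)
\]
for every coordinate $d$. Along the ray $\tau z_S$, every $g_k$ scales by $\tau$, so the integrand is $\tfrac{2\tau}{n}(g_i-m_S)$. Its integral over $[0,1]$ is $\tfrac1n(g_i-m_S)$. Summing $z_{i,d}$ times this over $d$ gives
\[
\phi_i^S=g_i(g_i-m_S)/n=u_i/n.
\]
The same computation at scale $N$ gives $\phi_i=g_i(g_i-m_N)/N$. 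Restricted to $i\in S$, this is $v_i/N$.

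Consequently $\tilde\phi^S=u/(n\Delta v^S)$ and $\tilde\phi|_S=v/(N\Delta v^{[N]})$. Both normalizers are nonzero by hypothesis. Note that $u\neq0$, because by efficiency $\sum_{i\in S}u_i=n\Delta v^S\neq0$.

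\textbf{Step 2: best-scalar residual as a sine.} The relative residual is scale invariant in each argument. The set $\{c\,\tilde\phi|_S:c\in\R\}$ is exactly $\mathrm{span}\{v\}$, and multiplying $\tilde\phi^S$ by a nonzero constant does not change the ratio. Hence the least-squares problem defining $c^*$ gives
\[
\varepsilon=\|P_v^\perp u\|_2/\|u\|_2 .
\]
When $v\neq0$, this equals $\sin\angle(u,v)$.

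\textbf{Step 3: the key algebraic identity.} Componentwise, $u_i-v_i=g_i(m_N-m_S)$, so
\[
u=v+(m_N-m_S)\,g_S .
\]
Applying $P_v^\perp$ kills $v$, which leaves $P_v^\perp u=(m_N-m_S)P_v^\perp g_S$. Taking norms yields \eqref{eq:residual-law}.

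The ``if and only if'' then reads off directly. Since $\|u\|>0$, $\varepsilon=0$ exactly when $m_S=m_N$ or $P_v^\perp g_S=0$, and the latter means $g_S\in\mathrm{span}\{v\}$.

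\textbf{Main obstacle: degenerate cases and the gradient computation.} I expect the only real care to lie in two places.

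\emph{The case $v=0$ on $S$.} This is allowed, since $\Delta v^{[N]}\neq0$ constrains only the full sum. Here $P_v^\perp$ is the identity, every $c$ is optimal, and $\varepsilon=1$. The formula remains consistent, because then $u=(m_N-m_S)g_S$. The interpretation as $\sin\angle(u,v)$ should be read as the convention $\sin=1$ in this case.

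\emph{The gradient through the mean $m_S$.} In Step 1 I must verify that differentiating through $m_S$ contributes nothing. This holds because $\sum_k(g_k-m_S)=0$, which is the usual cancellation. I would check this explicitly rather than quote it.
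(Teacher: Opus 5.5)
Your proposal is correct and follows the paper's proof essentially step for step. You use the closed-form $f^{\rm var}$ attributions to get $\tilde\phi^S \propto u$ and $\tilde\phi|_S \propto v$, use scale invariance to reduce $\varepsilon$ to $\|P_v^\perp u\|_2/\|u\|_2$, and finish with the termwise identity $u = v + (m_N - m_S)\, g_S$. Your explicit checks that $u \neq 0$ (via efficiency) and that the degenerate case $v = 0$ on $S$ is consistent (where $\varepsilon = 1$) tighten points the paper leaves implicit.
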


\begin{proof}
By Equation~\ref{eq:phi-var}, $\phi_i^S = g_i (g_i - m_S)/n$ and $\Delta v^S = \mathrm{Var}_S(g)$, so $\tilde\phi^S \propto u$; restricting the full-panel attribution to $S$ gives $\tilde\phi|_S \propto v$ by the same formula at scale $N$. The scalings are absorbed into $c$, so $\varepsilon = \min_c \|u - c v\|_2 / \|u\|_2 = \sin\angle(u, v)$. The identity $v_i = u_i + (m_S - m_N)\, g_i$ holds term by term, so the component of $u$ orthogonal to $v$ is $-(m_S - m_N) P_{v}^{\perp} g_S$, which yields Equation~\ref{eq:residual-law} and the zero characterization.
\end{proof}

Under uniform random sampling $m_S - m_N = O_p(n^{-1/2})$, so $\varepsilon \to 0$ at the $n^{-1/2}$ rate and random panels are asymptotically reparable; under any sampling scheme whose pool carries a fixed mean shift $\Delta \neq 0$, $\varepsilon$ stays bounded away from zero at every panel size drawn from inside the pool. The scaling bias under $f^{\rm var}$ is thus exactly a mean-shift phenomenon.

\textbf{Empirical verification.} We recompute Equation~\ref{eq:residual-law} from the raw features and the stored subset indices of every $f^{\rm var}$ run at $N = 10^2$ (five topics, four sampling protocols, ten seeds) and compare against the independently measured residuals: the maximum absolute deviation over all $200$ runs is $9.3 \times 10^{-9}$, i.e.\ the law holds at float32 storage precision. On Mythos the dose-response it predicts is: $\rm BIAS_v$, mean shift $|\delta| \approx 4.0$, $\varepsilon \approx 0.89$; $\rm BIAS_{xf}$ and $\rm BIAS_{top}$, $|\delta| \approx 2.7$--$2.8$, $\varepsilon \approx 0.51$; RAND, $|\delta| \approx 0.13$, $\varepsilon \approx 0.046$. Across panel sizes, the measured RAND residual falls as $0.046 \to 0.012 \to 0.0053 \to 0.0016$ from $N = 10^2$ to $10^5$, matching the $n^{-1/2}$ rate, while $\rm BIAS_v$ stays at $0.89$--$0.90$ through $N = 10^4$ and drops only when the pool is exhausted.

\textbf{Remark on generality.} Part (ii) as stated exhibits failing configurations for specific nonlinear families; it does not assert that \emph{every} nonlinear family fails on some admissible configuration. We conjecture the stronger statement holds, since the non-vanishing off-diagonal Hessian granted by Lemma~\ref{lem:hessian} generically propagates into the path integrals of Definition~\ref{def:as}; a fully general proof must rule out cancellation inside the integrals and is left open. For the indicators studied in this paper, the measured residuals of Table~\ref{tab:theorem-test} certify failure directly on the data.

\section{Aumann--Shapley axioms for the path-integral attribution}
\label{app:axioms}

We verify that the per-agent, per-step attribution
\[
\phi_{i,t} = \sum_d (z_{i,t,d} - z^0_d) \int_0^1 \frac{\partial f}{\partial z_{i,d}}\!\left(z^0 + \tau (z_t - z^0)\right) \mathrm{d}\tau
\]
of Equation~\ref{eq:phi-it} satisfies all four axioms identified by~\citet{aumann1974values} for non-atomic games and by~\citet{shapley1953value} for finite cooperative games. Throughout, we fix $t$ and write $\phi_i := \phi_{i,t}$, $z := z_t$.

\textbf{Efficiency.} Apply the chain rule to $g(\tau) := f(z^0 + \tau (z - z^0))$. Then $g'(\tau) = \sum_{i,d} (z_{i,d} - z^0_d) (\partial f / \partial z_{i,d})(z^0 + \tau (z - z^0))$. Integrating over $[0,1]$ and using the fundamental theorem of calculus,
\[
\sum_{i \in [N]} \phi_i = \int_0^1 g'(\tau) \,\mathrm{d}\tau = g(1) - g(0) = f(z) - f(z^0) = \Delta v.
\]

\textbf{Symmetry.} If two agents $i, j$ have identical features $z_i = z_j$ and $f$ is permutation-invariant, then $(\partial f / \partial z_{i,d})(z) = (\partial f / \partial z_{j,d})(z)$ at every point on the path, and the prefactor $z_{i,d} - z^0_d = z_{j,d} - z^0_d$ matches as well. Hence $\phi_i = \phi_j$.

\textbf{Dummy player.} If $f$ does not depend on $z_i$, then $(\partial f / \partial z_{i,d}) \equiv 0$, so the integrand vanishes identically and $\phi_i = 0$.

\textbf{Linearity.} If $f = \alpha f^{(1)} + \beta f^{(2)}$ for $\alpha, \beta \in \R$, the gradient is linear in $f$, so $\phi_i[f] = \alpha \,\phi_i[f^{(1)}] + \beta \,\phi_i[f^{(2)}]$. Hence the path-integral attribution is linear in the macro value function.

The temporal index $t$ commutes with all four axioms: each axiom holds at every fixed $t$, and grouping by identity ($\Phi_i = \sum_t \phi_{i,t}$) or by time ($\Phi_t = \sum_i \phi_{i,t}$) preserves the axioms by linearity.

\section{Integration error analysis}
\label{app:integration}

The midpoint discretization in Equation~\ref{eq:phi-discrete} approximates the path integral with $K$ uniformly spaced sample points. Standard quadrature theory gives, for a $C^2$ integrand $h(\tau)$,
\[
\left| \int_0^1 h(\tau) \,\mathrm{d}\tau - \frac{1}{K} \sum_{k=1}^K h\!\left(\frac{k - 1/2}{K}\right) \right| \leq \frac{\|h''\|_\infty}{24\, K^2},
\]
so the per-agent attribution error is $O(1/K^2)$ when $\partial^2 f / \partial z_{i,d}^2$ is bounded on the path. Summed over $D$ coordinates and weighted by $|z_{i,d} - z^0_d|$, the upper bound on $|\hat\phi_i - \phi_i|$ remains $O(1/K^2)$ uniformly in $i$.

Empirically, on $f^{\rm heat}$ at Mythos with $N = 10^4$, the relative $L^1$ error of $\hat\phi_K$ against the analytic reference (Appendix~\ref{app:f-derivations}) decays as $1/K^2$ across $K \in \{5, 10, 20, 30, 50, 100, 300\}$, falling from $4.0 \times 10^{-3}$ at $K = 5$ to $1.1 \times 10^{-6}$ at $K = 300$ (Table~\ref{tab:K-convergence}). The wall-clock cost grows linearly in $K$ (after the first-call warm-up visible in the $K = 5$ row), so where no analytic expression is available we use $K = 30$ as a default: its integration error ($\approx 1.1 \times 10^{-4}$) sits two orders of magnitude below the seed-to-seed sampling variability of the small-panel $R_{G_{\rm top}}^S$ ($\approx 3 \times 10^{-2}$), and pushing to $K = 50$ or $100$ tightens the error to $4.0 \times 10^{-5}$ and $1.0 \times 10^{-5}$ at $1.5\times$ and $2.9\times$ wall-clock cost. For the four analytic value functions of Section~\ref{sec:finding-setup}, all main-text experiments use the per-$f$ expressions of Appendix~\ref{app:f-derivations} directly.

\begin{table}[!tbp]
\caption{Midpoint-rule convergence of the path-integral estimator against the analytic Aumann--Shapley value of $f^{\rm heat}$, evaluated on Mythos at $N = 10^4$, as a function of the number of integration points $K$. Wall-clock is single-CPU.}
\label{tab:K-convergence}
\centering
\small
\begin{tabular}{rcc}
\toprule
$K$ & Rel.\ $L^1$ error vs.\ analytic & Wall-clock (s) \\
\midrule
$5$   & $4.0 \times 10^{-3}$ & $5.7 \times 10^{-4}$ \\
$10$  & $1.0 \times 10^{-3}$ & $8.9 \times 10^{-5}$ \\
$20$  & $2.5 \times 10^{-4}$ & $1.3 \times 10^{-4}$ \\
$30$  & $1.1 \times 10^{-4}$ & $1.8 \times 10^{-4}$ \\
$50$  & $4.0 \times 10^{-5}$ & $2.7 \times 10^{-4}$ \\
$100$ & $1.0 \times 10^{-5}$ & $5.2 \times 10^{-4}$ \\
$300$ & $1.1 \times 10^{-6}$ & $1.4 \times 10^{-3}$ \\
\bottomrule
\end{tabular}
\end{table}

\section{Synthetic benchmark details}
\label{app:synthetic}

This appendix details the synthetic benchmark used to certify that the path-integral implementation reproduces the analytic Aumann--Shapley value, free of any stochastic estimator (complementing the four-function verification in Appendix~\ref{app:f-derivations}).

\textbf{Generation.} Each instance is parameterized by $(N, T, D, \text{seed})$. We use $N \in \{10, 50, 100, 500, 1000, 5000, 10000\}$ for the scaling sweep, fix $T = 100$ and $D = 5$, and average over five seeds. Features $z_{i,t,d}$ are drawn i.i.d.\ from $\mathrm{Uniform}(-1, 1)$. The baseline is $z^0 = \mathbf{0}$.

\textbf{Three value functions with analytic ground truth.}
\emph{Additive:} $f^{\mathrm{add}}_n(z) = \sum_{i,d} W_{i,d} z_{i,d}$, with weights $W_{i,d} \sim \mathcal{N}(0, 1)$. The Aumann--Shapley value is $\phi_i = \sum_d W_{i,d} z_{i,d}$. Note that $f^{\mathrm{add}}$ is not linear in the sense of Definition~\ref{def:linear} (and not permutation-invariant, so Assumption~\ref{as:f2} fails) because the weights depend on the agent index $i$. The path-integral attribution of Equation~\ref{eq:as} does not itself require permutation invariance to be defined; what $f^{\rm add}$ loses is only the symmetry axiom. It remains a useful sanity check for the path-integral implementation.
\emph{Quadratic with cross terms:} $f^{\mathrm{quad}}_n(z) = \sum_{i,d} Q_{i,d} z_{i,d}^2 + \tfrac{1}{2} s^\top C s$, where $s_i = \sum_d z_{i,d}$ and $C \in \R^{N \times N}$ has i.i.d.\ standard-Gaussian off-diagonal entries symmetrized as $C \gets (C + C^\top)/2$. The per-agent attribution follows analytically by direct integration along the linear path.
\emph{Softplus aggregator:} $f^{\mathrm{nl}}_n(z) = \mathrm{softplus}(a \cdot s)/a$ with $a = 0.35$ and $s = \sum_{i,d} W_{i,d} z_{i,d}$, giving a smooth, globally non-additive aggregator. Ground truth is computed as a high-resolution path integral with $10000$ midpoints, accurate to floating-point precision.

\textbf{Methods compared.} Ours (path integral, $K = 30$); leave-one-out (LOO); sampled Shapley with $200$ permutation samples, following the Monte Carlo estimator used by~\citet{tang2026interpreting} for the same MAS attribution problem; sampled Banzhaf with $200$ coalition samples~\citep{wang2023banzhaf}; and two LLM-as-Judge variants, MAST~\citep{cemri2025mast} and Who\&When~\citep{zhang2025whoandwhen}, both prompt-based scorers operating on agent execution traces.

\textbf{Metrics.} \emph{MAE}: mean absolute error against ground truth. \emph{Cosine}: cosine similarity of attribution vectors. \emph{Efficiency error}: $|\sum_i \hat\phi_i - \Delta v|$. \emph{Top-$k$ Kendall $\tau$}: rank correlation on the top-$k$ agents (default $k = 5$).

\textbf{Results.} Tables~\ref{tab:synth-additive}--\ref{tab:synth-nonlinear} report mean values over five seeds. Path-integral attribution matches the analytic ground truth to numerical precision on the additive and quadratic functions ($\mathrm{MAE} \leq 10^{-13}$ and $\leq 10^{-15}$ respectively) and to discretization error on the softplus function ($\mathrm{MAE} \approx 4.1 \times 10^{-6}$). Sampled Shapley and Banzhaf retain low cosine error but exhibit visible MAE on quadratic and softplus inputs at $200$ samples, and LOO has much larger errors on the same. MAST and Who\&When, both prompt-based, do not reach the precision of structural attribution methods on this benchmark.

\begin{table}[!tbp]
\caption{Synthetic benchmark, additive $f$ (analytic per-agent ground truth). Mean over five seeds.}
\label{tab:synth-additive}
\centering
\small
\begin{tabular}{lcccc}
\toprule
Method & MAE & Cosine & Efficiency error & Top-$k$ $\tau$ \\
\midrule
Ours              & $1.15\!\times\!10^{-13}$ & $1.0000$  & $1.45\!\times\!10^{-15}$ & $1.00$ \\
LOO~\citep{luagent}             & $1.15\!\times\!10^{-13}$ & $1.0000$  & $2.33\!\times\!10^{-15}$ & $1.00$ \\
Sampled Shapley~\citep{tang2026interpreting}   & $1.15\!\times\!10^{-13}$ & $1.0000$  & $6.79\!\times\!10^{-15}$ & $1.00$ \\
Banzhaf~\citep{wang2023banzhaf}           & $1.15\!\times\!10^{-13}$ & $1.0000$  & $6.89\!\times\!10^{-15}$ & $1.00$ \\
MAST~\citep{cemri2025mast}              & $8.82\!\times\!10^{-1}$ & $0.0800$ & $3.81\!\times\!10^{0}$                   & $0.08$ \\
Who\&When~\citep{zhang2025whoandwhen}         & $8.55\!\times\!10^{-1}$ & $0.1770$ & $3.66\!\times\!10^{0}$                   & $0.04$ \\
\bottomrule
\end{tabular}
\end{table}

\begin{table}[!tbp]
\caption{Synthetic benchmark, quadratic $f$ with cross terms. Mean over five seeds.}
\label{tab:synth-quadratic}
\centering
\small
\begin{tabular}{lcccc}
\toprule
Method & MAE & Cosine & Efficiency error & Top-$k$ $\tau$ \\
\midrule
Ours              & $3.40\!\times\!10^{-16}$ & $1.0000$   & $4.12\!\times\!10^{-16}$  & $1.00$ \\
LOO~\citep{luagent}             & $1.70\!\times\!10^{-1}$  & $0.9560$   & $8.51\!\times\!10^{-1}$   & $0.96$ \\
Sampled Shapley~\citep{tang2026interpreting}   & $1.17\!\times\!10^{-2}$  & $0.9996$  & $9.76\!\times\!10^{-16}$  & $1.00$ \\
Banzhaf~\citep{wang2023banzhaf}           & $1.19\!\times\!10^{-2}$  & $0.9995$  & $4.41\!\times\!10^{-2}$   & $1.00$ \\
MAST~\citep{cemri2025mast}              & $5.71\!\times\!10^{-1}$ & $-0.1960$  & $3.41\!\times\!10^{0}$                    & $0.04$ \\
Who\&When~\citep{zhang2025whoandwhen}         & $5.75\!\times\!10^{-1}$ & $-0.1260$  & $3.91\!\times\!10^{0}$                    & $0.16$ \\
\bottomrule
\end{tabular}
\end{table}

\begin{table}[!tbp]
\caption{Synthetic benchmark, softplus aggregator (nonlinear $f$). Mean over five seeds.}
\label{tab:synth-nonlinear}
\centering
\small
\begin{tabular}{lcccc}
\toprule
Method & MAE & Cosine & Efficiency error & Top-$k$ $\tau$ \\
\midrule
Ours              & $4.13\!\times\!10^{-6}$  & $1.0000$ & $2.39\!\times\!10^{-5}$ & $1.00$ \\
LOO~\citep{luagent}             & $7.94\!\times\!10^{-2}$  & $0.9810$       & $4.02\!\times\!10^{-1}$ & $0.88$ \\
Sampled Shapley~\citep{tang2026interpreting}   & $6.84\!\times\!10^{-3}$  & $0.9998$      & $7.05\!\times\!10^{-16}$ & $0.96$ \\
Banzhaf~\citep{wang2023banzhaf}           & $7.78\!\times\!10^{-3}$  & $0.9998$      & $2.60\!\times\!10^{-2}$ & $0.96$ \\
MAST~\citep{cemri2025mast}              & $4.35\!\times\!10^{-1}$ & $0.2570$       & $2.67\!\times\!10^{0}$                  & $0.48$ \\
Who\&When~\citep{zhang2025whoandwhen}         & $3.65\!\times\!10^{-1}$ & $0.5010$       & $2.02\!\times\!10^{0}$                  & $0.68$ \\
\bottomrule
\end{tabular}
\end{table}

\textbf{Scaling experiment.} On Mythos with $f^{\rm heat}$, we report wall-clock runtime as $N$ varies, computed exactly when feasible and otherwise extrapolated linearly from the largest exact calibration point. The hardware is an Apple M4 CPU node with $10$ cores and $16$~GB of memory. Sampled Shapley and Banzhaf each use $1{,}000$ samples. Numbers in Table~\ref{tab:scalability} confirm the linear-in-$N$ growth of our method and the impossibility of exact coalition-based estimators beyond $N \approx 100$.

\begin{table}[!tbp]
\caption{Full per-$N$ wall-clock runtime in seconds on Mythos with $f^{\rm heat}$, $T = 14$, $D = 3$, $3$ repeats, single CPU node. ``Ours'' is the analytic Aumann--Shapley attribution; sampled Shapley and sampled Banzhaf use $m = 1000$ samples. Exact Shapley and Banzhaf evaluate $f$ on every coalition of $[N]$ and are infeasible at $N \geq 10^2$ ($\Omega(2^N)$ work). }
\label{tab:scalability}
\label{tab:wallclock}
\centering
\small
\setlength{\tabcolsep}{4pt}
\begin{tabular}{rcccccc}
\toprule
$N$    & Ours & LOO & Sampled Shapley & Sampled Banzhaf & Exact Shapley & Exact Banzhaf \\
\midrule
$10$   & $8.7\!\times\!10^{-6}$ & $5.7\!\times\!10^{-6}$ & $9.6\!\times\!10^{-3}$ & $9.8\!\times\!10^{-3}$ & $1.4\!\times\!10^{-3}$ & $1.4\!\times\!10^{-3}$ \\
$10^2$ & $5.1\!\times\!10^{-6}$ & $5.8\!\times\!10^{-6}$ & $8.6\!\times\!10^{-2}$ & $1.2\!\times\!10^{-2}$ & infeasible & infeasible \\
$10^3$ & $1.2\!\times\!10^{-5}$ & $1.6\!\times\!10^{-5}$ & $8.7\!\times\!10^{-1}$ & $3.7\!\times\!10^{-2}$ & infeasible & infeasible \\
$10^4$ & $7.0\!\times\!10^{-5}$ & $1.2\!\times\!10^{-4}$ & $8.8\!\times\!10^{0}$  & $2.7\!\times\!10^{-1}$ & infeasible & infeasible \\
$10^5$ & $6.3\!\times\!10^{-4}$ & $1.4\!\times\!10^{-3}$ & $9.1\!\times\!10^{1}$  & $3.1\!\times\!10^{0}$  & infeasible & infeasible \\
$10^6$ & $8.6\!\times\!10^{-3}$ & $1.6\!\times\!10^{-2}$ & $9.3\!\times\!10^{2}$  & $3.8\!\times\!10^{1}$ & infeasible & infeasible \\
\bottomrule
\end{tabular}
\end{table}

\section{Baseline robustness}
\label{app:baselines}

This appendix details the baseline ablation referenced in Section~\ref{sec:method-baseline}. Three baselines are compared on Bluesky topic Mythos at $N = 10^4$ with $f^{\rm heat}$ across five seeds: \emph{Zero} ($z^0 = \mathbf{0}$, the default), \emph{Population-mean} ($z^0 = \bar z$, the panel mean), and \emph{Pre-event} ($z^0 = z_{t=1}$, the features observed on the first day of the window).

\textbf{Setup.} For each baseline we compute attribution and rank agents by $|\Phi_i^{S}|$. We then measure: (i) top-$10$ Jaccard $J_{10}$ and top-$100$ Jaccard $J_{100}$ between full attribution rankings, (ii) Kendall $\tau$, and (iii) Spearman $\rho$, all on the agents in $S$.

\textbf{Results.} The Zero and Population-mean baselines agree almost exactly across all five seeds: $J_{10} = 1.0$, $J_{100} = 1.0$, Kendall $\tau \approx 1.00$, Spearman $\rho \approx 1.00$ in every seed (Table~\ref{tab:baseline-ablation}). Either of the two is therefore a valid choice in our pipeline. The Pre-event baseline $z^0 = z_{t=1}$ uses a single-day snapshot of features as the integration starting point and is more sensitive to day-to-day fluctuations than the population-aggregate baselines; we include it in the table for completeness as a third probe but use Zero (with Population-mean as an equivalent robustness check) throughout the main text.

\begin{table}[!tbp]
\caption{Pairwise agreement between three baselines on Mythos at $N = 10^4$, $f^{\rm heat}$, five seeds. Baselines: Zero ($z^0 = \mathbf{0}$, default), Population-mean ($z^0 = \bar z$), and Pre-event ($z^0 = z_{t=1}$). Each row reports a ranking-agreement metric between two attribution rankings produced under the two baselines.}
\label{tab:baseline-ablation}
\centering
\small
\setlength{\tabcolsep}{4pt}
\begin{tabular}{lcccc}
\toprule
Pair                       & $J_{10}$        & $J_{100}$       & Kendall $\tau$ & Spearman $\rho$ \\
\midrule
Zero vs.\ Population-mean   & $1.00$           & $1.00$           & $1.00$          & $1.00$ \\
Zero vs.\ Pre-event         & $0.92$          & $0.32$           & $0.08$          & $0.10$ \\
Population-mean vs.\ Pre-event & $0.92$       & $0.32$           & $0.08$          & $0.10$ \\
\bottomrule
\end{tabular}
\end{table}

\section{Aumann--Shapley attribution for the four value functions}
\label{app:f-derivations}

This appendix derives an explicit per-agent expression for the Aumann--Shapley attribution of each of the four analytic value functions used in Section~\ref{sec:finding-setup}. Throughout, $S \subseteq [N]$ with $|S| = n$, $z_i = (a_i, b_i, c_i) \in \R^3_{\geq 0}$, $z^0 = \mathbf{0}$, $g_i := a_i + b_i + c_i$, $\bar{g}_S := \tfrac{1}{n}\sum_{i \in S} g_i$, and $m_d(S) := \tfrac{1}{n}\sum_{i \in S} d_i$ for $d \in \{a,b,c\}$. The path is the linear interpolant $\tau \mapsto \tau z_S$ for $\tau \in [0, 1]$.

\textbf{Linear ($f^{\rm lin}$).} With $f^{\rm lin}(z_S) = \tfrac{1}{n}\sum_{i \in S} g_i$, the gradient $\partial f^{\rm lin} / \partial z_{i,d} = 1/n$ is constant along the path, so
\begin{equation}
\phi_i^{\rm lin} = \sum_{d \in \{a,b,c\}} z_{i,d} \cdot \frac{1}{n} = \frac{g_i}{n}.
\label{eq:phi-lin}
\end{equation}
The path integral collapses to a single point. The estimator at $K = 1$ is exact.

\textbf{Multiplicative-saturating ($f^{\rm heat}$).} With $f^{\rm heat}(z_S) = \log(1 + m_a m_b m_c)$, write $H := m_a m_b m_c$ and apply the chain rule:
\[
\frac{\partial f^{\rm heat}}{\partial z_{i,a}} = \frac{1}{1 + H} \cdot \frac{m_b m_c}{n}.
\]
Along the path $\tau z_S$, $H(\tau) = \tau^3 H$, $m_b(\tau) m_c(\tau) = \tau^2 m_b m_c$, so
\[
\phi_{i,a}^{\rm heat} = a_i \int_0^1 \frac{\tau^2 m_b m_c}{n(1 + \tau^3 H)} \, \mathrm{d}\tau = \frac{a_i \, m_b m_c}{n} \cdot \frac{\log(1+H)}{3 H} = \frac{a_i}{3 \sum_{j \in S} a_j} \cdot f^{\rm heat}(z_S).
\]
By symmetry,
\begin{equation}
\phi_i^{\rm heat} = \phi_{i,a}^{\rm heat} + \phi_{i,b}^{\rm heat} + \phi_{i,c}^{\rm heat} = \frac{1}{3}\!\left(\frac{a_i}{\sum_{j \in S} a_j} + \frac{b_i}{\sum_{j \in S} b_j} + \frac{c_i}{\sum_{j \in S} c_j}\right) \cdot f^{\rm heat}(z_S).
\label{eq:phi-heat}
\end{equation}
The midpoint estimator approaches the analytic value at $1/K^2$ rate; the expression above gives the exact value to machine precision.

\textbf{Variance ($f^{\rm var}$).} With $f^{\rm var}(z_S) = \tfrac{1}{n}\sum_{i \in S}(g_i - \bar{g}_S)^2$, the gradient is
\[
\frac{\partial f^{\rm var}}{\partial z_{i,d}} = \frac{2}{n}(g_i - \bar{g}_S),
\]
which scales linearly with $\tau$ along the path: $g_i(\tau) = \tau g_i$, $\bar{g}_S(\tau) = \tau \bar{g}_S$, so $g_i(\tau) - \bar{g}_S(\tau) = \tau (g_i - \bar{g}_S)$ and $\partial f^{\rm var}/\partial z_{i,d}\big|_{\tau z_S} = (2 \tau / n)(g_i - \bar{g}_S)$. Integrating,
\[
\phi_{i,d}^{\rm var} = z_{i,d} \int_0^1 \frac{2 \tau}{n}(g_i - \bar{g}_S) \, \mathrm{d}\tau = \frac{z_{i,d}(g_i - \bar{g}_S)}{n}.
\]
Summing over $d$ gives
\begin{equation}
\phi_i^{\rm var} = \frac{g_i (g_i - \bar{g}_S)}{n}.
\label{eq:phi-var}
\end{equation}
Below-mean agents ($g_i < \bar{g}_S$) receive negative attributions, but $\sum_i \phi_i^{\rm var} = (1/n) \sum_i g_i (g_i - \bar{g}_S) = \mathrm{Var}_i(g) \geq 0$, so efficiency holds.

\textbf{Gini ($f^{\rm gini}$).} With $f^{\rm gini}(z_S) = \tfrac{1}{2 n^2} \sum_{i,j \in S} |g_i - g_j|$, sort $g_{i_1} \leq g_{i_2} \leq \dots \leq g_{i_n}$ and let $k_i$ denote the rank of agent $i \in S$ in this sorted order. The expression
\[
f^{\rm gini}(z_S) = \frac{1}{n^2} \sum_{i \in S} (2 k_i - n - 1) g_i
\]
is differentiable except where two $g$ values cross. Along the path $\tau z_S$, all ranks are preserved (multiplying by $\tau > 0$ keeps the sort), so $k_i$ is constant on $(0, 1]$, giving
\[
\frac{\partial f^{\rm gini}}{\partial z_{i,d}}\bigg|_{\tau z_S} = \frac{2 k_i - n - 1}{n^2}, \quad \tau \in (0, 1].
\]
The constant integrand yields
\begin{equation}
\phi_i^{\rm gini} = g_i \cdot \frac{2 k_i - n - 1}{n^2}.
\label{eq:phi-gini}
\end{equation}
The path-integral estimator at $K = 1$ recovers this exactly; the rank discontinuity at $\tau = 0$ is integrable.

\textbf{Numerical verification.} For each of the four expressions, we generated $D = 3$ synthetic features $z_i \sim |\mathcal{N}(0, I_3)|$ at $N \in \{10, 10^2, 10^3, 10^4, 10^5\}$ across five seeds and compared the analytic $\phi_i$ against a $K = 300$-step numerical path integration with $\texttt{torch.autograd}$ (Table~\ref{tab:synthetic-accuracy}). Mean absolute error per agent is exactly $0$ for $f^{\rm lin}$, $f^{\rm var}$, and $f^{\rm gini}$ at every $N$, because their integrands are path-independent so a single midpoint already reproduces the integral; for $f^{\rm heat}$, the integrand is rational in $\tau$ and the $K$-point midpoint estimator carries an $O(1/K^2)$ discretization error that decays linearly in $1/N$ at fixed $K$ because the integrand magnitude itself is $O(1/n)$. Spearman correlation is $1.0$ across all cells, and the efficiency violation $|\sum_i \phi_i - (f(z) - f(z^0))|$ stays below $10^{-12}$.

\begin{table}[!t]
\caption{Per-agent error and rank/efficiency agreement of the four analytic Aumann--Shapley attribution expressions against $K=300$-step path-integral attribution with $\texttt{torch.autograd}$, on synthetic features with $D=3$, $T=14$, five seeds, and $N \in \{10, 10^2, 10^3, 10^4, 10^5\}$. Rows: maximum MAE per agent at $N=10$ and $N=10^5$, Spearman $\rho$ versus the analytic value (minimum over $N$), and the efficiency violation $|\sum_i \phi_i - (f(z) - f(z^0))|$ (maximum over $N$).}
\label{tab:synthetic-accuracy}
\centering
\small
\setlength{\tabcolsep}{6pt}
\begin{tabular}{lcccc}
\toprule
Metric & $f^{\rm lin}$ & $f^{\rm heat}$ & $f^{\rm var}$ & $f^{\rm gini}$ \\
\midrule
MAE per agent at $N = 10$            & $0$        & $4.3 \times 10^{-8}$  & $0$        & $0$ \\
MAE per agent at $N = 10^5$          & $0$        & $4.6 \times 10^{-12}$ & $0$        & $0$ \\
Spearman vs.\ analytic (min over $N$)            & $1.000$    & $1.000$               & $1.000$    & $1.000$ \\
Efficiency violation (max over $N$)              & $<10^{-15}$ & $<10^{-12}$          & $<10^{-15}$ & $<10^{-15}$ \\
\bottomrule
\end{tabular}
\end{table}

\section{Direct empirical test of Theorem~\ref{thm:asb}}
\label{app:theorem-test}

This appendix expands the empirical test of Theorem~\ref{thm:asb} reported in Section~\ref{sec:theory-dichotomy} (Table~\ref{tab:theorem-test}) and adds a complementary rank-correlation summary.

\textbf{Setup.} For each topic and each value function $f \in \{f^{\rm lin}, f^{\rm heat}, f^{\rm var}, f^{\rm gini}\}$, we collect ten attribution runs at $N = 10^2$ under the visibility-biased sampling protocol of Section~\ref{sec:finding-setup}; each run yields a vector $\tilde\phi^S \in \R^{|S|}$ of normalized within-$S$ shares. For the same $i \in S$ we read the corresponding entries of the full-panel normalized shares $\tilde\phi \in \R^{|[N]|}$, restricted to the indices of $S$. We then compute the optimal agent-independent rescaling $c^* := \arg\min_{c \in \R} \|\tilde\phi^S - c\, \tilde\phi\|_2^2 = (\tilde\phi^S \cdot \tilde\phi) / (\tilde\phi \cdot \tilde\phi)$ and the residual
\[
\varepsilon := \frac{\|\tilde\phi^S - c^*\, \tilde\phi\|_2}{\|\tilde\phi^S\|_2}.
\]
Theorem~\ref{thm:asb}~(i) predicts $\varepsilon = 0$ for any value function linear in the sense of Definition~\ref{def:linear}; Theorem~\ref{thm:asb}~(ii) predicts $\varepsilon > 0$ on at least some configurations whenever $f$ is nonlinear.

\textbf{Cross-topic resolution.} The full cross-topic $\varepsilon$ table is reported in main-text Table~\ref{tab:theorem-test} (\S\ref{sec:theory-dichotomy}); the dichotomy is unambiguous across the five topics. The cross-topic spread within each value function is below $0.03$ on $f^{\rm var}$, below $0.01$ on $f^{\rm gini}$, and exactly zero on $f^{\rm lin}$; on $f^{\rm heat}$ four topics sit in $0.10$--$0.13$ while The Masters reaches $0.30$ (the same event-concentrated topic--indicator cell behind the $\rm RAND$ anomaly of Table~\ref{tab:flip-main}). The residual is thus primarily a property of $f$, with a topic-level exception where $f^{\rm heat}$'s saturating product is dominated by a few agents.

\textbf{Rank-rank summary.} A complementary numerical view: for the $|S| = 100$ agents of each biased panel, we correlate the ranking under $\tilde\phi^S$ with the ranking of the same agents under the full-panel $\tilde\phi$ (Spearman, ten seeds, Mythos). The ranking is almost perfectly preserved for every value function: $\rho = 1.000$ for $f^{\rm lin}$, $f^{\rm heat}$, and $f^{\rm var}$, and $\rho = 0.994$ (minimum $0.983$ over seeds) for $f^{\rm gini}$. The scaling bias is therefore a phenomenon of magnitudes rather than of ordering: the retained agents keep their relative order, while their normalized shares (and hence every tier-level statement) shift irreparably, because the flip is driven by which agents the panel contains. High rank correlation on the sampled agents offers no protection against the tier-share flip of Table~\ref{tab:flip-main}.

\section{Bluesky data pipeline}
\label{app:topics}

\textbf{Source.} The complete public AT-Protocol Jetstream firehose, captured from April 7 to April 20, 2026 ($14$ days, $T = 14$ daily aggregation steps), following the parsing conventions of~\citet{failla2024ibluesky} and~\citet{seckin2025rise}. Events are filtered to four record types relevant to engagement: posts, replies, reposts, and follows. Bots and accounts created within the window are removed by user-handle heuristics. The cleaned panel contains $1{,}671{,}587$ active users (each with at least one event in the window). No private data is accessed at any stage.

\textbf{Per-agent feature ($D = 3$).} For each user we compute three features per topic. Reach $a_i := \log(1 + \text{followers}_i)$ uses the static follower count at the start of the window. Activity $b_i := \log(1 + \text{topic\_posts}_i)$ uses the user's count of original posts and reposts on the topic over the $14$-day window. Resonance $c_i := \log(1 + \text{topic\_replies\_received}_i)$ uses the count of replies received on the user's topic posts over the same window. We aggregate at daily granularity ($T = 14$ steps) because the named real-world events (the Masters final on April $12$, WrestleMania night one on April $18$, Earth Day on April $22$) are scheduled at day resolution; finer (hourly) aggregation introduces overnight zeros that break the $C^2$ regularity of $f^{\rm heat}$, and coarser (weekly) aggregation collapses the per-day structure visible in Figure~\ref{fig:overview}. The Reach--Activity--Resonance decomposition follows the standard influence schema of~\citet{cha2010measuring} and~\citet{riquelme2016measuring}. We do not standardize the features to zero mean and unit variance, because the raw scale of $a, b, c$ is what controls how the four value functions weight different agents (and standardization interacts non-trivially with the saturation in $f^{\rm heat}$); we instead document this as a known limitation in Appendix~\ref{app:limitations}.

\textbf{Topic selection.} Five topics covering technology (Mythos), politics
(Trump-Tariffs), sports (The Masters), society (Earth Day-Climate), and
entertainment (WrestleMania) are reported in this paper. Each topic was selected
to be active throughout the $14$-day window with at least $1{,}000$ participants
on each of at least $10$ days, and to span a single recognisable real-world event
(the Masters Sunday final round on April $12$, US~ET; WrestleMania night one on
April $18$, US~PT; Earth Day approaching on April $22$).

\begin{table}[!tbp]
\caption{Top-$10$ Jaccard $J_{10}(N, |[N]|)$ between the visibility-biased small panel of size $N$ and the full-panel top-$10$, by topic and $N$, with $f = f^{\rm heat}$ and mean over ten subset seeds. The overlap is essentially zero up to $N = 10^3$ and still below $0.2$ at $N = 10^4$; it jumps to $1.00$ once $N$ exceeds the visibility pool size ($\approx 83{,}580$), at which point the biased protocol takes the entire pool plus a random complement (Appendix~\ref{app:strategies}) and therefore contains every full-panel top-$10$ agent by construction.}
\label{tab:topics-jaccard}
\centering
\small
\setlength{\tabcolsep}{6pt}
\begin{tabular}{lccccc}
\toprule
Topic & $N{=}10^2$ & $N{=}10^3$ & $N{=}10^4$ & $N{=}10^5$ & $N{=}10^6$ \\
\midrule
Mythos              & $0.00$ & $0.01$ & $0.19$ & $1.00$ & $1.00$ \\
Trump-Tariffs       & $0.01$ & $0.00$ & $0.14$ & $1.00$ & $1.00$ \\
The Masters         & $0.00$ & $0.01$ & $0.13$ & $1.00$ & $1.00$ \\
Earth Day-Climate   & $0.01$ & $0.01$ & $0.12$ & $1.00$ & $1.00$ \\
WrestleMania        & $0.00$ & $0.04$ & $0.08$ & $1.00$ & $1.00$ \\
\bottomrule
\end{tabular}
\end{table}

\textbf{Value functions and attribution.} The four analytic value functions of Section~\ref{sec:finding-setup} are evaluated via the per-$f$ expressions of Appendix~\ref{app:f-derivations}. No training is involved at any stage. The path-integral attribution uses a linear path between $z^0 = \mathbf{0}$ and the observed feature vector; for the main-text results we use the analytic $\phi_i$ directly, and the midpoint estimator at $K = 30$ is used only as a sanity check (Appendix~\ref{app:integration}).

\textbf{Reproducibility.} The full reproducibility recipe (firehose endpoint, capture window, filtering pipeline, topic matching, and sampling seeds) is in Appendix~\ref{app:reproducibility}.

\textbf{Full three-tier breakdown across topics and $f$.} Table~\ref{tab:flip-app-cross-f} reports the complete three-tier shares $(R_{G_{\rm top}}, R_{G_{\rm mid}}, R_{G_{\rm tail}})$ at the visibility-biased small panel ($N = 10^2$, mean over 10 seeds) and at the full panel, for all five topics under all four value functions of Section~\ref{sec:finding-setup}. The biased-vs-full gap on $R_{G_{\rm top}}$ is positive in $20/20$ cells; the gap on $R_{G_{\rm tail}}$ is correspondingly negative. The magnitude varies systematically with the form of $f$: the linear baseline $f^{\rm lin}$ shifts $R_{G_{\rm top}}^S$ by $\approx 20$\,pp, the saturating $f^{\rm heat}$ by $5$ to $9$\,pp, and the concentration-flavored $f^{\rm var}$ and $f^{\rm gini}$ by $55$ to $70$\,pp.

\textbf{Seed variability.} At $N = 10^2$ the seed standard deviation of $R_{G_{\rm top}}^S$ across the fifteen biased cells per value function is $1.6$--$4.5$\,pp under $f^{\rm lin}$, $4.1$--$10.4$\,pp under $f^{\rm gini}$, $5.7$--$13.8$\,pp under $f^{\rm var}$, and $1.2$--$16.3$\,pp under $f^{\rm heat}$; the RAND cells sit at $1.7$--$1.8$, $4.5$--$4.7$, $7.8$--$8.6$, and $0.8$--$21.4$\,pp respectively, the largest value again arising in the event-concentrated Masters--$f^{\rm heat}$ cell of Table~\ref{tab:flip-main}. The $f^{\rm var}$ and $f^{\rm gini}$ flips of $55$--$70$\,pp therefore exceed their seed spreads by an order of magnitude, while the per-topic $f^{\rm heat}$ shifts ($5$--$9$\,pp) are of the same order as their spread and should be read as indicative, with the evidence carried by the cross-topic, cross-protocol pattern ($20/20$ cells positive).

\begin{table}[!t]
\caption{Three-tier shares $(R_{G_{\rm top}}, R_{G_{\rm mid}}, R_{G_{\rm tail}})$ in percent, comparing the visibility-biased small panel ($N = 10^2$, mean over 10 seeds) to the full panel, across five topics and four value functions.}
\label{tab:flip-app-cross-f}
\centering
\small
\setlength{\tabcolsep}{4pt}
\begin{tabular}{lcccccc}
\toprule
& \multicolumn{3}{c}{$\rm BIAS_v$, $N = 10^2$} & \multicolumn{3}{c}{FULL} \\
\cmidrule(lr){2-4} \cmidrule(lr){5-7}
Topic & $R_{G_{\rm top}}$ & $R_{G_{\rm mid}}$ & $R_{G_{\rm tail}}$ & $R_{G_{\rm top}}$ & $R_{G_{\rm mid}}$ & $R_{G_{\rm tail}}$ \\
\midrule
\multicolumn{7}{l}{\emph{$f^{\rm lin}$}} \\
Mythos              & $22.5$ & $73.3$ & $4.2$  & $2.1$ & $15.2$ & $82.7$ \\
Trump-Tariffs       & $21.9$ & $72.4$ & $5.7$  & $2.1$ & $15.2$ & $82.7$ \\
The Masters         & $21.7$ & $75.9$ & $2.4$  & $2.1$ & $15.2$ & $82.7$ \\
Earth Day-Climate   & $21.8$ & $73.6$ & $4.6$  & $2.1$ & $15.2$ & $82.7$ \\
WrestleMania        & $23.2$ & $72.6$ & $4.3$  & $2.1$ & $15.2$ & $82.7$ \\
\midrule
\multicolumn{7}{l}{\emph{$f^{\rm heat}$}} \\
Mythos              & $11.3$ & $36.7$ & $52.0$ & $3.7$ & $19.9$ & $76.4$ \\
Trump-Tariffs       & $12.1$ & $48.3$ & $39.6$ & $6.5$ & $25.1$ & $68.4$ \\
The Masters         & $10.8$ & $52.4$ & $36.7$ & $4.4$ & $23.3$ & $72.3$ \\
Earth Day-Climate   & $14.7$ & $47.9$ & $37.4$ & $5.4$ & $25.6$ & $69.1$ \\
WrestleMania        & $7.8$  & $33.5$ & $58.6$ & $1.8$ & $15.9$ & $82.3$ \\
\midrule
\multicolumn{7}{l}{\emph{$f^{\rm var}$}} \\
Mythos              & $78.3$ & $20.9$ & $0.9$  & $9.7$ & $42.3$ & $48.1$ \\
Trump-Tariffs       & $70.7$ & $27.5$ & $1.8$  & $9.8$ & $42.3$ & $47.9$ \\
The Masters         & $79.3$ & $19.7$ & $1.0$  & $9.7$ & $42.3$ & $48.0$ \\
Earth Day-Climate   & $74.9$ & $24.5$ & $0.6$  & $9.7$ & $42.4$ & $48.0$ \\
WrestleMania        & $79.5$ & $17.2$ & $3.3$  & $9.6$ & $42.2$ & $48.2$ \\
\midrule
\multicolumn{7}{l}{\emph{$f^{\rm gini}$}} \\
Mythos              & $63.5$ & $35.5$ & $1.0$  & $5.7$ & $37.0$ & $57.3$ \\
Trump-Tariffs       & $60.8$ & $38.0$ & $1.2$  & $5.8$ & $37.0$ & $57.3$ \\
The Masters         & $62.4$ & $36.8$ & $0.8$  & $5.7$ & $37.0$ & $57.3$ \\
Earth Day-Climate   & $62.2$ & $37.0$ & $0.9$  & $5.7$ & $37.0$ & $57.3$ \\
WrestleMania        & $65.4$ & $32.2$ & $2.4$  & $5.7$ & $36.9$ & $57.3$ \\
\bottomrule
\end{tabular}
\end{table}

\section{Data ethics, privacy, and misuse considerations}
\label{app:ethics}

User-level attribution on a real social platform raises legitimate concerns about privacy, the boundary of public data, de-identification, and the potential for misuse. We discuss each below.

\textbf{Public-data boundary.} Bluesky's AT-Protocol exposes a Jetstream firehose~\citep{failla2024ibluesky, seckin2025rise} that broadcasts every public event (post, reply, repost, follow) generated on the network. The firehose is public-by-design: a user posting on Bluesky is, by the platform's own data model, opting into broadcast on this stream. Our pipeline reads only this public stream. We never access private direct messages, private accounts, draft content, locked accounts, age-verification metadata, IP addresses, or any signal that requires authenticated access. Bluesky requires age $18+$ at signup, and bot-like accounts (handle heuristics, sub-7-day account-age cutoff, repetitive posting patterns) are filtered out before any analysis (Appendix~\ref{app:topics}).

\textbf{De-identification.} Per-user identifiers (DIDs, handles, post URIs) are not used in any reported result. The analysis pipeline operates on opaque integer indices $i \in [N]$, with no map back to Bluesky identity in any table or figure; the mapping from index to DID exists only in our local environment and is never exposed. The pipeline reads the firehose live, so any independent reproduction requires re-fetching the public stream under the reproducer's own access; we do not redistribute Bluesky raw data.

\textbf{Aggregation level.} All numerical results reported in the paper and its appendices are aggregated to the panel, topic, value-function, sampling-strategy, follower-tier, follower-percentile-bin, or day level. We never report or release per-user attribution scores keyed to any identifier. The 100-bin follower-percentile aggregation (Figure~\ref{fig:overview}) is the finest granularity we report anywhere; even within a single bin, $|N|/100 \approx 16{,}716$ users contribute to each cell, well above any conventional $k$-anonymity threshold.

\textbf{Misuse risk and mitigation.} Our attribution method, applied at full scale, in principle allows a deployer who possesses identifier-keyed feature vectors to single out individual users by attribution magnitude. We do not enable this misuse: all attribution in this paper is computed, stored, and reported at the percentile-bin or follower-tier level only, and we release no identifier-keyed attribution in any form. Researchers reimplementing the method should follow the same convention and report only at aggregated levels. We also do not provide attribution at the individual-conversation, individual-thread, or individual-post level, so the analysis cannot be repurposed for content-level deanonymization.

\textbf{Compliance.} The data use complies with Bluesky's Terms of Service for the public AT-Protocol firehose and with the AT-Protocol public-data conventions~\citep{failla2024ibluesky}. The work conforms to the NeurIPS Code of Ethics: it uses only public data aggregated across about $1.67$ million users, applies no individual-level inference, releases no identifier-keyed user data, and does not require IRB review per the NeurIPS classification of human-subjects research (no enrollment, no intervention, no individual contact).

\textbf{Scope of this appendix.} The discussion above concerns the \emph{empirical testbed} (Bluesky data, Section~\ref{sec:finding}). For attribution applied to LLM-powered MAS simulations (Appendix~\ref{app:llmmas}), the agents are synthetic and the same privacy concerns do not arise. However, if a future deployment uses our attribution on a hybrid system that includes human users (e.g.\ LLM-augmented social platforms), the same aggregation-level reporting recommendations apply.

\section{Reproducibility}
\label{app:reproducibility}

The Bluesky firehose contents themselves cannot be redistributed: the AT-Protocol Terms of Service grant per-user public-broadcast access but do not authorize bulk redistribution by third parties. We instead specify the analysis pipeline fully: given the specifications below (endpoint, capture window, filters, features, closed-form attribution expressions, and sampling seeds), anyone with firehose access can re-implement the pipeline and recover the panel and the numbers reported in this paper deterministically.

\textbf{Firehose endpoint and capture window.} Data is captured live from the public Bluesky firehose during the window \texttt{2026-04-07T00:00:00Z} to \texttt{2026-04-20T23:59:59Z} (UTC). The capture script subscribes to the Jetstream WebSocket endpoint \texttt{wss://jetstream.atproto.com/subscribe} and writes one JSONL file per UTC day. Each line is a single event of one of four record types: \texttt{app.bsky.feed.post}, \texttt{app.bsky.feed.repost}, \texttt{app.bsky.feed.like}, or \texttt{app.bsky.graph.follow}. Jetstream is a forward-only live stream with limited backfill, so a historical window cannot be re-fetched after the fact; capture must occur during the window itself. Independent verification therefore proceeds by re-implementation from this specification rather than through re-collection: the specification in this appendix, together with the closed forms of Appendix~\ref{app:f-derivations}, determines every reported number deterministically from the captured events, and a freshly collected panel from a different window can be validated in the same way.

\textbf{Filtering pipeline.} Events are kept if (i) the actor's account-age at window start is $\geq 7$ days, (ii) the actor's handle does not match a set of bot/automation handle patterns, and (iii) the actor produced at least one non-self-engagement event in the 14-day window. After all three filters the panel contains $|[N]| = 1{,}671{,}587$ users. The filtering is deterministic: rerunning it on the same captured JSONL produces the same panel.

\textbf{Topic matching.} Each topic is defined by a small case-insensitive keyword/hashtag list applied over post text and hashtags. A post counts toward a topic if at least one keyword matches. The exact keyword lists for the five topics (Mythos, Trump-Tariffs, The Masters, Earth Day-Climate, WrestleMania) are available from the authors.

\textbf{Sampling seeds.} Subset-sampling seeds are fixed to $\{0, 1, \dots, 9\}$ for all subsampled cells at $N \in \{10^2, 10^3, 10^4, 10^5, 10^6\}$; the full-panel ($N = |[N]|$) run is a deterministic single evaluation. Small-panel cells are reported as means over the ten seeds (the full-panel cell is a single deterministic value); seed standard deviations are summarized in Appendix~\ref{app:topics}. The visibility-pool selection (top $5\%$ by composite visibility score; Appendix~\ref{app:strategies}) is itself a deterministic function of the panel.

\textbf{Compute cost.} Reproducing the full Bluesky grid requires the 14-day capture window in real time plus approximately 30 minutes of compute on a single CPU node for the attribution and aggregation steps. The LLM-powered MAS scenarios of Appendix~\ref{app:llmmas} and the sweeps of Appendix~\ref{app:robustness} were run on a separate simulation pipeline; we report their evaluation outputs directly.

\textbf{On the 2026 timestamps.} The data is real and contemporaneous: it was
captured in real time from the public Bluesky firehose during April 2026, the
period of this paper's preparation. The events discussed in
Section~\ref{sec:finding-universal} (the Masters Sunday final on April 12, 2026,
US~ET; WrestleMania night one on April 18, 2026, US~PT; the Earth Day approach on
April 22, 2026) are public events whose occurrence and timing can be cross-checked
against any independent record. Because the firehose is aggregated at UTC day
boundaries, discussion of the two US-evening events propagates into the following
UTC day, so the per-day attribution peaks reported in Section~\ref{sec:finding-universal}
fall on April 13 and April 19 UTC respectively.

\section{Top-tier anchor robustness}
\label{app:kol-anchors}

The flip in Section~\ref{sec:finding-flip} is reported using the top-tier group $G_{\rm top}$ defined as the top $1\%$ by follower count ($|G_{\rm top}| = 16{,}716$). This appendix repeats the small-biased-versus-full comparison under two alternative top-tier anchors: $G_2$, the top $1\%$ by $7$-day post count, and $G_3$, the top $1\%$ by $7$-day replies received. The three anchors share only a small overlap (Jaccard $|G_{\rm top} \cap G_2| / 16{,}716 = 7.7\%$ and $|G_{\rm top} \cap G_3| / 16{,}716 = 20.1\%$), so they are largely independent definitions of ``elite''. Table~\ref{tab:kol-anchors} reports the share gap $\Delta R_{G_k}(10^2 \to |[N]|)$ under each anchor on Mythos with $f^{\rm heat}$. The comparison localizes the flip: the over-attribution is large under the follower anchor $G_{\rm top}$ ($+7.6$\,pp, roughly tripling the full-scale share), modest under the activity anchor $G_2$ ($+2.3$\,pp), and absent under the resonance anchor $G_3$ ($-0.5$\,pp, statistically indistinguishable from zero). The visibility-biased panel therefore specifically inflates the \emph{follower-defined} elite, the component its selection score most directly loads on, and not elites per se. The full-panel shares for $G_2$ and $G_3$ ($14.0\%$ and $12.1\%$ respectively) are themselves about $3\!\times$ larger than the full-panel share for $G_{\rm top}$ ($3.7\%$), so the activity-elite and resonance-elite groups already carry a non-trivial fraction of the total attribution at full scale, leaving correspondingly less headroom for inflation.

\begin{table}[!t]
\caption{Share gap $\Delta R_{G_k}(10^2 \to |[N]|) := R_{G_k}^{\rm bias,100} - R_{G_k}^{\rm full}$ on Mythos with $f^{\rm heat}$ under three top-tier anchors: $G_{\rm top}$ (top $1\%$ by followers), $G_2$ (top $1\%$ by 7-day post count), and $G_3$ (top $1\%$ by 7-day replies received). Shares in percent; mean over ten subset seeds.}
\label{tab:kol-anchors}
\centering
\small
\setlength{\tabcolsep}{6pt}
\begin{tabular}{lccc}
\toprule
Top-tier anchor                                & $R_{G_k}^{\rm bias, 100}$ (\%) & $R_{G_k}^{\rm full}$ (\%) & $\Delta R_{G_k}$ (pp) \\
\midrule
$G_{\rm top}$ top-$1\%$ by followers                  & $11.3$ & $3.7$ & $+7.6$  \\
$G_2$ top-$1\%$ by 7-day post count           & $16.4$ & $14.0$ & $+2.3$ \\
$G_3$ top-$1\%$ by 7-day replies received     & $11.6$ & $12.1$ & $-0.5$  \\
\bottomrule
\end{tabular}
\end{table}

\section{Per-topic per-day attribution dynamics and 14-day stability}
\label{app:topic-dynamics}

This appendix expands on the per-topic per-day structure visible in main-text Figure~\ref{fig:overview} and reports the 14-day stability of the small-biased flip. The peak-day decomposition for each topic under $f^{\rm heat}$ is summarized in Table~\ref{tab:peak-days}: the sports peak on April $13$ coincides with the Sunday final round of The Masters, the entertainment peak on April $19$ with WrestleMania night one, and the society peak on April $20$ falls two days before Earth Day on April $22$. The political and technology topics show flatter, multi-modal trajectories without a single dominant peak, so their tabulated peak day is the argmax over a nearly level curve (April~8, near the window start) rather than a singular-event peak. The peak-day three-tier shares vary substantially across topics: WrestleMania's peak is overwhelmingly tail-driven ($82.1\%$), Earth Day's peak has the largest $G_{\rm top}$ share ($8.0\%$) of any topic-peak combination, and The Masters's peak day shows the strongest mid-tier presence ($24.7\%$).

\begin{table}[!tbp]
\caption{Peak day per topic under $f^{\rm heat}$ at full scale, with the three-tier share $(R_{G_{\rm top}}, R_{G_{\rm mid}}, R_{G_{\rm tail}})$ in percent on that day and the corresponding real-world event.}
\label{tab:peak-days}
\centering
\small
\setlength{\tabcolsep}{4pt}
\begin{tabular}{lccccc}
\toprule
Topic & Peak day & $R_{G_{\rm top}}$ & $R_{G_{\rm mid}}$ & $R_{G_{\rm tail}}$ & Real-world cue \\
\midrule
Mythos              & April $8$  & $6.7$ & $22.6$ & $70.7$ & (no single event)              \\
Trump-Tariffs       & April $8$  & $6.2$ & $22.7$ & $71.1$ & (no single event)              \\
The Masters       & April 13 & $4.4$ & $24.7$ & $70.9$ & Sunday final (April~12, US~ET) \\
Earth Day-Climate & April 20 & $8.0$ & $24.3$ & $67.6$ & approach to April~22 \\
WrestleMania      & April 19 & $1.8$ & $16.1$ & $82.1$ & night one (April~18, US~PT) \\
\bottomrule
\end{tabular}
\end{table}

The 14-day stability of the small-biased flip is summarized in Table~\ref{tab:cross-day-stability}, which reports the per-day $G_{\rm top}$ share $R_{G_{\rm top}}^{(t)}$ on Mythos at six sample sizes under the visibility-biased protocol, aggregated to mean and standard deviation across the $14$ days. The per-day biased share at $N = 10^2$ ($19.2\%$ on average) sits about three times above the per-day full-panel value ($6.3\%$) on every day of the window, so the flip is not driven by a few unusual days.

\begin{table}[!t]
\caption{Per-day $G_{\rm top}$ share $R_{G_{\rm top}}^{(t)}$ on Mythos under the visibility-biased protocol, $f = f^{\rm heat}$, aggregated across the $14$ days $t \in \{1, \dots, 14\}$ at six sample sizes. Each row reports the mean and standard deviation across the $14$ daily values; below the full panel the daily value is itself the mean over $10$ subset seeds. Note that $R_{G_{\rm top}}^{(t)}$ is computed per-day-then-averaged and therefore differs at small $N$ from the 14-day-aggregate $R_{G_{\rm top}}^S$ reported elsewhere: the two coincide at the full panel but the per-day-then-averaged version is more concentrated at small $N$ because per-day attribution variance does not commute with aggregation.}
\label{tab:cross-day-stability}
\centering
\small
\setlength{\tabcolsep}{8pt}
\begin{tabular}{lcc}
\toprule
$N$ & $R_{G_{\rm top}}^{(t)}$ mean across $14$ days (\%) & std across $14$ days (pp) \\
\midrule
$10^2$    & $19.2$ & $2.8$ \\
$10^3$    & $14.8$ & $3.9$ \\
$10^4$    & $12.9$ & $2.0$ \\
$10^5$    & $12.7$ & $2.2$ \\
$10^6$    & $6.8$  & $2.1$ \\
$|[N]|$   & $6.3$  & $2.1$ \\
\bottomrule
\end{tabular}
\end{table}

\section{Macro attribution structure: by-bin, by-day breakdown}
\label{app:supercube}

The full-scale attribution structure across all five topics, all $1{,}671{,}587$ users, and all $14$ days is shown directly in main-text Figure~\ref{fig:overview}. For each topic, each day, and each follower-percentile bin we compute the within-bin attribution mass $M(t, p) := \sum_{i \in \text{bin } p} \phi_{i,t}^{[N]}$, with $100$ equal-size percentile bins; the color in Figure~\ref{fig:overview} is $\log_{10} |M(t, p)|$, the vertical axis is the follower-percentile bin (top: top $1\%$, bottom: lowest), and the horizontal axis is time. Two structural facts visible without any group concept: the top one or two rows are bright but thin, while the bottom $50$--$90$ rows are pale individually but jointly sum to the larger contribution; and the time structure of each topic differs in a topic-specific way (sharp peaks for sports and entertainment, broader rises for politics and society, multi-peak patterns for the technology topic). The by-bin and by-day marginals (sums of $M(t, p)$ across days and bins respectively) reduce to Tables~\ref{tab:peak-days} and \ref{tab:cross-day-stability}.

\section{Sampling strategies and dose-response}
\label{app:strategies}

This appendix specifies the four sampling strategies used throughout the paper (one of which is the random control) and reports the dose-response between the $G_{\rm top}$ share of the small panel and the resulting flip magnitude. Each biased strategy encodes a research practice common in the LLM-powered MAS literature: visibility-weighted convenience sampling, topic-engagement-weighted convenience sampling, and topic-only convenience sampling. Together they cover the realistic range of how small-$N$ studies in fact build their panels, with monotonically decreasing concentration of high-followers users in the resulting subset.

\textbf{BIAS$_{\rm visibility}$.} Compute, for each user, a composite visibility score $v_i := z(\log(1 + \text{followers}_i)) + z(\log(1 + \text{topic\_engagement}_i))$, where $z(\cdot)$ standardizes to zero mean and unit variance over the full $|[N]|$ panel and $\text{topic\_engagement}_i = \text{topic\_posts}_i + \text{topic\_replies\_received}_i$. Take the top $5\%$ of users by $v_i$ as a fixed visibility pool ($\approx 83{,}580$ users on Mythos), and draw $N$ users uniformly at random from this pool. This is the default biased protocol of the main text.

\textbf{BIAS$_{\rm topic\_x\_follow}$.} Sort users by the multiplicative score $(\log(1 + b_i + c_i)) \cdot \log(1 + \text{followers}_i)$, retain the top $K$ as the pool with $K$ matched to the topic's active count ($\approx 5{,}000$--$10{,}000$ depending on topic), and draw $N$ uniformly from this pool. This protocol mirrors topic-focused case studies that filter to high-influence and topic-engaged users jointly.

\textbf{BIAS$_{\rm topic\_top}$.} Sort users by topic-engagement alone ($b_i$), retain the top $K$ as the pool (matched to topic active count), and draw $N$ uniformly. This protocol mirrors purely topic-focused studies that ignore generic visibility.

\textbf{RAND.} Draw $N$ users uniformly at random from the full $|[N]|$ panel. This is the unbiased control.

For all four strategies, when $N \geq |\text{pool}|$, we take the full pool plus a random complement from $|[N]| \setminus \text{pool}$ to reach size $N$, which gracefully recovers $|S| = |[N]|$ at the full-panel limit. Subset seeds $\in \{0, 1, \dots, 9\}$ are independent.

\textbf{Dose-response.} Table~\ref{tab:strategies-app} reports $\Delta R_{G_{\rm top}}$ and the average $|S \cap G_{\rm top}|$ for each of the four strategies on Mythos at $N = 10^2$, with $f^{\rm gini}$ chosen because it has the largest dynamic range of the four value functions. The dose-response is directionally monotone: BIAS$_{\rm visibility}$ pre-concentrates the most $G_{\rm top}$ agents into $S$ (mean $|S \cap G_{\rm top}| \approx 19.5$ out of $100$) and produces by far the largest flip ($\Delta R_{G_{\rm top}} \approx +57.8$ pp); the two milder biased protocols, BIAS$_{\rm topic\_x\_follow}$ and BIAS$_{\rm topic\_top}$, pre-concentrate substantially fewer $G_{\rm top}$ agents ($\approx 4.5$ and $\approx 4.4$ respectively) and produce intermediate flips of matching magnitude ($\approx +12.8$ and $\approx +12.7$); RAND captures $G_{\rm top}$ agents at close to the population rate ($\approx 0.8$ out of $100$) and produces no flip ($\approx -1.2$\,pp, statistically indistinguishable from zero). The same pattern holds on the other three value functions, with magnitudes scaling as in Table~\ref{tab:flip-main}.

\begin{table}[!tbp]
\caption{Average count of $G_{\rm top}$ agents in the sampled subset and the resulting flip magnitude $\Delta R_{G_{\rm top}}$, by sampling protocol, on Mythos at $N = 10^2$ with $f^{\rm gini}$, mean over ten subset seeds.}
\label{tab:strategies-app}
\centering
\begin{tabular}{lcc}
\toprule
Strategy                              & Mean $|S \cap G_{\rm top}|$ (out of $100$) & $\Delta R_{G_{\rm top}}$ (pp, $f^{\rm gini}$) \\
\midrule
\textbf{BIAS$_{\rm visibility}$}       & $\approx 19.5$ & $\mathbf{+57.8}$ \\
BIAS$_{\rm topic\_x\_follow}$          & $\approx 4.5$  & $+12.8$ \\
BIAS$_{\rm topic\_top}$                & $\approx 4.4$  & $+12.7$ \\
RAND                                   & $\approx 0.8$   & $-1.2$ \\
\bottomrule
\end{tabular}
\end{table}

\textbf{Visibility-pool variant.} Replacing the composite $v_i$ in BIAS$_{\rm visibility}$ with a follower-only score $\log(1 + \text{followers}_i)$ on Mythos at $N = 10^2$ leaves the direction of the flip unchanged across all four value functions, with magnitudes within $5\%$ of the composite-pool numbers for $f^{\rm lin}$, $f^{\rm var}$, and $f^{\rm gini}$ ($+19.6$, $+70.1$, $+57.5$\,pp versus $+20.4$, $+68.6$, $+57.8$\,pp) and a larger flip under $f^{\rm heat}$ ($+14.5$\,pp versus $+7.6$\,pp, since the follower-only pool concentrates the follower coordinate that $f^{\rm heat}$'s reach factor amplifies). The flip is therefore not an artifact of the specific composite scoring used to define the visibility pool.

\section{Full Mythos cross-scale numerical tables}
\label{app:numbers}

This appendix tabulates the cross-scale numerics on Mythos under $f^{\rm heat}$ for the visibility-biased and random sampling protocols across six sample sizes $N \in \{10^2, 10^3, 10^4, 10^5, 10^6, |[N]|\}$ (Table~\ref{tab:numbers-app}). All entries are means over ten subset seeds. Under the visibility-biased protocol, $R_{G_{\rm top}}$ stays nearly flat at $10.0$--$11.3\%$ (roughly triple the full-panel $3.7\%$) for as long as the panel is drawn from inside the visibility pool ($N \leq 10^5$, pool size $\approx 83{,}580$), then collapses toward the full-panel value at $N = 10^6$, where the pool is exhausted and the panel is dominated by the random complement. The bias is thus not diluted by enlarging the panel within the pool; it disappears only when sampling escapes the pool. Under the random protocol, $R_{G_{\rm top}}$ starts at the population fraction ($0.8\%$ at $N = 10^2$, undershooting the full-panel attribution share of $3.7\%$), passes through a noisy intermediate regime ($1.9$--$4.2\%$ at $N = 10^3$--$10^4$, seed std up to $3.0$\,pp, driven by the chance inclusion of a few high-attribution $G_{\rm top}$ agents), and converges to the full-panel value from $N = 10^5$ on. Neither protocol reproduces the full-scale shares at $N \leq 10^4$.

\begin{table}[!tbp]
\caption{Three-tier shares $(R_{G_{\rm top}}, R_{G_{\rm mid}}, R_{G_{\rm tail}})$ in percent on Mythos with $f^{\rm heat}$ as a function of sample size $N$, under visibility-biased and random sampling, with the full panel as the bottom row. Mean over ten subset seeds.}
\label{tab:numbers-app}
\centering
\small
\setlength{\tabcolsep}{4pt}
\begin{tabular}{rcccccc}
\toprule
& \multicolumn{3}{c}{Visibility-biased} & \multicolumn{3}{c}{Random} \\
\cmidrule(lr){2-4} \cmidrule(lr){5-7}
$N$ & $R_{G_{\rm top}}$ & $R_{G_{\rm mid}}$ & $R_{G_{\rm tail}}$ & $R_{G_{\rm top}}$ & $R_{G_{\rm mid}}$ & $R_{G_{\rm tail}}$ \\
\midrule
$10^2$  & $11.3$  & $36.7$  & $52.0$ & $0.8$ & $9.7$ & $89.5$ \\
$10^3$  & $11.0$  & $39.1$  & $49.9$ & $1.9$ & $22.0$ & $76.1$ \\
$10^4$  & $10.7$  & $40.1$  & $49.2$ & $4.2$ & $14.9$ & $80.8$ \\
$10^5$  & $10.0$  & $37.5$  & $52.6$ & $3.9$ & $19.5$ & $76.6$ \\
$10^6$  & $4.1$   & $21.1$  & $74.8$ & $3.7$ & $19.9$ & $76.4$ \\
$|[N]|$ & $3.7$   & $19.9$  & $76.4$ & $3.7$ & $19.9$ & $76.4$ \\
\bottomrule
\end{tabular}
\end{table}

\section{LLM-powered MAS validation}
\label{app:llmmas}

This appendix validates the attribution method directly on three LLM-powered MAS scenarios, complementing the Bluesky stress test of Section~\ref{sec:finding}; the scenarios were run on a separate simulation pipeline, and we report its evaluation outputs directly. Each scenario inherits the original system's macro indicator, modified only to be $C^2$ where the original used non-smooth components (e.g.\ $\max$, indicators) by replacing them with their standard smooth surrogates (softmax, sigmoid). The surrogates are chosen to preserve the qualitative shape of the original indicator; we do not claim that conclusions about the smoothed indicator transfer mechanically to the original non-smooth simulator metric, and a quantitative gap analysis between the smoothed and original indicators on each scenario is left as a robustness check for future work. Table~\ref{tab:faithfulness-summary} summarizes the headline results: our Aumann--Shapley attribution matches sampled Shapley wherever sampled Shapley remains feasible, and remains usable on MidScale-Social where sampled Shapley does not.

\textbf{EconAgent}~\citep{li2024econagent}. The original simulator captures macroeconomic activity with $N = 10$ household agents over $T \geq 30$ steps. Per-agent features used here: consumption, income, endogenous consumption rate, endogenous investment rate, endogenous labor, endogenous saving rate, and wealth. The macro indicator is a smooth ($C^2$) wealth-amplified macro-pressure risk that aggregates excess demand, budget stress, concentration, systemic gap, and instability into a single scalar. The baseline action zeroes out work, consumption, and investment for the target step while preserving wealth.

\textbf{SocialLLM} is a social propagation simulator with $N = 20$ agents in the spirit of \citet{stauffer2004simulation}'s consensus model on a Barab\'asi--Albert network. Per-agent features: belief, posted, views/interactions, likes, dislikes. The macro indicator is a smooth risky-expression risk. The baseline action puts the agent into a neutral non-posting state for the target step.

\textbf{MidScale-Social.} A lightweight large-scale propagation scenario with $N = 1000$ agents over $T = 30$ steps and $10$ runs. A small set of LLM-generated personas, stances, and message prototypes seeds an otherwise deterministic propagation process. Features: belief or sentiment, exposure count, posting intensity, reshare intensity, and an influence-weight context. The macro indicator is a smooth influence-weighted propagation risk. This scenario is the only one of the three where sampled Shapley with $1{,}000$ permutation samples becomes infeasible at evaluation time.

\textbf{Faithfulness metric.} For each scenario, we identify the target time $t^\star$ as the post-midpoint risk peak, $t^\star = \arg\max_{t \geq \lceil T/2 \rceil} f(z_t)$. We then rank agents by attribution magnitude and report the deletion AUC obtained by replacing the top-$k$ attributed agents with the baseline action and measuring the macro indicator's drop, averaged over $k = 1, \dots, K$.

\begin{table}[!t]
\caption{Deletion-faithfulness metrics across three LLM-powered MAS scenarios. ``Ours'' is the analytic Aumann--Shapley attribution. ``$\tau$ vs.\ Shapley'' and ``$J_{10}$ vs.\ Shapley'' are Kendall $\tau$ and top-$10$ Jaccard against sampled Shapley as the agent-ranking reference; ``$=$ ref'' marks the reference row itself, and ``--'' marks cells where sampled Shapley is infeasible at $N = 10^3$ (MidScale-Social).}
\label{tab:faithfulness-summary}
\centering
\small
\setlength{\tabcolsep}{4pt}
\begin{tabular}{llcccc}
\toprule
Scenario & Method & Deletion AUC & $\mathrm{Drop@5}$ (\%) & $\tau$ vs.\ Shapley & $J_{10}$ vs.\ Shapley \\
\midrule
EconAgent       & Ours              & $0.516$ & $75.7$ & $0.80$ & $1.00$ \\
                & Sampled Shapley   & $0.524$ & $77.1$ & $=$ ref  & $=$ ref \\
                & LOO               & $0.517$ & $76.1$ & $0.80$ & $1.00$ \\
                & Random            & $0.174$ & $40.0$ & --     & -- \\
\midrule
SocialLLM       & Ours              & $0.153$ & $32.3$ & $0.96$ & $0.98$ \\
                & Sampled Shapley   & $0.153$ & $32.3$ & $=$ ref  & $=$ ref \\
                & LOO               & $0.153$ & $32.3$ & $0.95$ & $0.91$ \\
                & Random            & $0.100$ & $21.2$ & --     & -- \\
\midrule
MidScale-Social & Ours              & $0.008$ & $1.5$  & --     & -- \\
                & LOO               & $0.008$ & $1.5$  & --     & -- \\
                & Random            & $0.001$ & $0.2$  & --     & -- \\
\bottomrule
\end{tabular}
\end{table}

\textbf{Reading the table.} On EconAgent, our attribution and sampled Shapley agree to Kendall $\tau = 0.80$ with full top-10 overlap, and the deletion AUC matches to within $0.01$ ($0.516$ vs $0.524$); both substantially exceed the random baseline ($0.174$). On SocialLLM the deletion test saturates: the macro indicator after top-$k$ replacement is largely insensitive to which $k$ agents are removed once $k$ is small, so all three axiomatic methods land on the same AUC. We report this transparently as a property of the scenario rather than of the methods, and rely on the rank-agreement metrics ($\tau = 0.96$, $J_{10} = 0.98$ against sampled Shapley) to distinguish them on SocialLLM. On MidScale-Social, sampled Shapley with $1{,}000$ permutation samples is impractical at our compute budget: each run requires $1{,}000 \times 1{,}000$ value evaluations with the macro indicator costing $\Omega(N)$ per evaluation, so a single run sits at $\sim 10^9$ macro-indicator operations (multiple minutes per seed in our Python implementation), which puts the multi-seed sweep across the $10$ runs out of reach for the experiment timeline. (This does not contradict the $0.87$\,s entry for sampled Shapley at $N = 10^3$ in Table~\ref{tab:wallclock}: that figure is for the vectorized analytic $f^{\rm heat}$ on Bluesky features, whereas the MidScale-Social indicator is a simulator-driven Python evaluation, so the same estimator is orders of magnitude slower per value call.) Our attribution and LOO produce identical small AUC values ($0.008$), one order of magnitude above the random baseline ($0.001$).

\textbf{Bottom line.} Wherever an axiomatic baseline (sampled Shapley) is feasible, our attribution agrees with it on agent rankings ($\tau \geq 0.80$, top-10 overlap $\geq 9.8/10$); where it is not feasible, ours remains computable. The Bluesky stress test of Section~\ref{sec:finding} is therefore not the only LLM-powered MAS evidence in the paper: the same attribution method ranks agents consistently with the gold-standard reference in two scenarios where the reference can be evaluated, and is the only feasible axiomatic option at $N = 10^3$.

\section{A learned value network at full scale}
\label{app:deepset}

The four main-text value functions are analytic. This appendix attributes a \emph{learned} nonlinear value network at the full $N = 1{,}671{,}587$ panel, exercising the general-$f$ fallback of Equation~\ref{eq:phi-discrete} in the regime the closed forms do not cover, and checking that the cross-scale findings persist when $f$ is a generic smooth set network rather than a hand-derived indicator.

\textbf{Model and training.} $f^{\rm deep}$ is a DeepSet~\citep{zaheer2017deepsets}: a per-agent encoder $\psi(z_i) = \tanh(W_2 \tanh(W_1 z_i + b_1) + b_2)$, mean pooling augmented with a smooth log-mean channel $\log(\tfrac{1}{n}\sum_i z_i + 10^{-8})$, and a $\tanh$ head (the nonlinear head is what makes $f^{\rm deep}$ nonlinear across agents in the sense of Definition~\ref{def:linear}). It is trained (Adam, MSE, $6{,}000$ steps, seed $0$) to regress the standardized shifted log of $f^{\rm heat}$ on Mythos panels drawn half uniformly and half from the visibility pool, sizes $n \in \{64, 256, 1024\}$, so it is a learned surrogate of a nonlinear engagement indicator rather than a new social measurement. Held-out fidelity is $R^2 = 0.99997$ on random panels and $0.99975$ on biased panels.

\textbf{Attribution at full scale.} $f^{\rm deep}$ has no closed-form attribution, so we use the midpoint path integral. The log-mean channel makes the zero-baseline integrand vary sharply near $\tau \approx 10^{-5}$; a uniform $K = 30$ rule leaves a $39\%$ efficiency gap, so we integrate after the substitution $\tau = u^4$ (same path, same Aumann--Shapley value, points concentrated near $0$) with $K = 120$. The full-panel attribution then completes in $16.7$\,s at $N = 1{,}671{,}587$ on a single cloud CPU node,\footnote{Hardware differs from the Apple M4 used for Table~\ref{tab:wallclock}; the two timing sets are not directly comparable.} with efficiency gap $|\sum_i \phi_i - \Delta v| / |\Delta v| = 0.29\%$ and correlation $> 0.9999$ with a $K = 240$ reference run. Learned indicators can therefore stress the uniform quadrature default of Appendix~\ref{app:integration}, and the efficiency gap is the practical diagnostic for choosing $K$.

\textbf{The cross-scale findings persist.} Reusing the exact stored subsets of Section~\ref{sec:finding} ($N = 10^2$, ten seeds), the flip and the positive residual both carry over to the learned indicator (Table~\ref{tab:deepset}):

\begin{table}[!tbp]
\caption{Attribution of the learned network $f^{\rm deep}$ on Mythos: three-tier shares in percent and best-scalar residual $\varepsilon$, full panel versus the stored $N = 10^2$ subsets of Section~\ref{sec:finding} (mean $\pm$ std over ten seeds).}
\label{tab:deepset}
\centering
\small
\begin{tabular}{lcccc}
\toprule
Panel & $R_{G_{\rm top}}$ & $R_{G_{\rm mid}}$ & $R_{G_{\rm tail}}$ & $\varepsilon$ \\
\midrule
FULL ($N = 1{,}671{,}587$) & $2.8$ & $16.9$ & $80.2$ & -- \\
$\rm BIAS_v$, $N = 10^2$   & $15.4 \pm 4.4$ & $51.6 \pm 8.0$ & $33.0 \pm 8.3$ & $0.315 \pm 0.062$ \\
RAND, $N = 10^2$           & $1.7 \pm 1.7$ & $16.4 \pm 4.7$ & $82.0 \pm 5.0$ & $0.066 \pm 0.004$ \\
\bottomrule
\end{tabular}
\end{table}

The biased panel lifts the combined upper-tier share from $19.7\%$ to $67.0\%$ and cuts the tail from $80.2\%$ to $33.0\%$; the random control stays at the full-panel structure. The residual is uniformly positive under the biased protocol ($\varepsilon \approx 0.32$) and an order of magnitude smaller under the random one, consistent with the mean-shift mechanism of Proposition~\ref{prop:residual}. All stages are deterministic given a single seed (seed $0$ throughout).

\section{Robustness sweeps}
\label{app:robustness}

This appendix consolidates three robustness sweeps. Integration-step convergence ($K$) is reported in Appendix~\ref{app:integration} (Table~\ref{tab:K-convergence}); baseline choice is reported in Appendix~\ref{app:baselines} (Table~\ref{tab:baseline-ablation}); path choice is reported here, plus an $f$-sensitivity sweep on MidScale-Social.

\textbf{Path-choice ablation.} The default linear path between $z^0$ and $z_t$ can be replaced by a piecewise-linear path that fades each agent's features in following a random permutation of the agent index (a coordinate-by-coordinate path). On SocialLLM and MidScale-Social, we measure (i) the Kendall $\tau$ between the resulting attribution rankings and (ii) the size of the top-10 overlap. The two paths agree to a Kendall $\tau$ above $0.95$ on SocialLLM and above $0.999$ on MidScale-Social, with top-10 overlap $9.9/10$ and $10/10$ respectively (Table~\ref{tab:path-ablation}).

\begin{table}[!tbp]
\caption{Ranking agreement between attribution under the linear path $\tau \mapsto \tau z_S$ and under a random-permutation piecewise-linear path, on SocialLLM and MidScale-Social. Kendall $\tau$ (mean and minimum) and top-$10$ overlap.}
\label{tab:path-ablation}
\centering
\small
\begin{tabular}{lccc}
\toprule
Scenario           & Kendall $\tau$ (mean) & Kendall $\tau$ (min) & Top-10 overlap \\
\midrule
SocialLLM           & $0.951$   & $0.874$   & $9.9 / 10$ \\
MidScale-Social     & $>0.999$  & $>0.999$  & $10 / 10$  \\
\bottomrule
\end{tabular}
\end{table}

\textbf{$f$-sensitivity ablation on MidScale-Social.} We test three plausible value functions: $f_1$ a weighted-sum aggregator, $f_2$ a log-aggregate ($\log(1 + \sum)$), and $f_3$ an interaction aggregator with explicit pairwise terms. Pairwise comparisons (Table~\ref{tab:f-sensitivity}) yield top-10 overlap above $9 / 10$ in every case; weighted sum and log aggregator are nearly indistinguishable (Kendall $\tau \geq 0.984$).

\begin{table}[!tbp]
\caption{Pairwise ranking agreement between attribution under three plausible value functions on MidScale-Social: $f_1$ (weighted-sum aggregator), $f_2$ (log-aggregate), and $f_3$ (interaction aggregator with pairwise terms). Kendall $\tau$ and top-$10$ overlap.}
\label{tab:f-sensitivity}
\centering
\small
\begin{tabular}{lccc}
\toprule
Pair          & Kendall $\tau$ (mean) & Kendall $\tau$ (min) & Top-10 overlap \\
\midrule
$f_1$ vs.\ $f_2$   & $0.985$ & $0.984$ & $9.6 / 10$ \\
$f_1$ vs.\ $f_3$   & $0.909$ & $0.905$ & $9.2 / 10$ \\
$f_2$ vs.\ $f_3$   & $0.902$ & $0.897$ & $9.0 / 10$ \\
\bottomrule
\end{tabular}
\end{table}

\textbf{Topic and strategy consistency.} The cross-scale flip persists across all five Bluesky topics and all four analytic value functions (main-text Table~\ref{tab:flip-main} and Table~\ref{tab:flip-app-cross-f}, $20 / 20$ cells positive), and across all three biased sampling protocols (BIAS$_{\rm visibility}$, BIAS$_{\rm topic\_x\_follow}$, BIAS$_{\rm topic\_top}$ in Appendix~\ref{app:strategies}); the random control RAND produces no flip on any topic or value function. The top-$10$ Jaccard recovery $J_{10}(N, |[N]|)$ in Table~\ref{tab:topics-jaccard} confirms that small biased panels ($N \leq 10^4$) also fail to recover the identity of the top-$10$ agents on every topic; the recovery at $N \geq 10^5$ is a mechanical consequence of the panel absorbing the entire visibility pool.

\section{Scaling-laws context}
\label{app:scaling}

The $N$-dependence we report is orthogonal to the parameter-count scaling laws for language models~\citep{kaplan2020scaling, hoffmann2022chinchilla}; our $N$ counts \emph{agents} acting in the simulated multi-agent system, not parameters of the language model that powers each agent. Holding the per-agent LLM and prompt fixed, increasing $N$ alters the interaction graph and the cross-agent contribution structure, which is what drives the attribution flip in Theorem~\ref{thm:asb}(ii) and Section~\ref{sec:finding}. Conversely, language-model scaling laws operate at the sub-agent level and do not predict the macroscopic flip studied here. The two notions of scale are complementary rather than substitutable.

\section{Further limitations and scope caveats}
\label{app:limitations}

This appendix consolidates two scope caveats beyond the main-text caveat in Section~\ref{sec:conclusion}.

\textbf{Feature standardization.} We do not standardize the per-agent Reach--Activity--Resonance features beyond their natural log scale (Appendix~\ref{app:topics}). The raw log scale is what controls how the four value functions weight different agents, and standardization interacts non-trivially with $f^{\rm heat}$'s multiplicative-saturating form: $z$-scoring shifts the per-coordinate means $m_a, m_b, m_c$ to zero and changes the position of the $\log(1+H)$ saturation, which in turn changes the per-agent attribution magnitudes in a non-uniform way. We therefore report on the raw log scale throughout and leave a full standardization sweep as a robustness check for future work.

\textbf{LLM-driven macro indicators at million-agent scale.} Our LLM-driven validation in Appendix~\ref{app:llmmas} tops out at $N = 10^3$ (MidScale-Social); the million-agent evidence comes from analytic value functions on Bluesky (Section~\ref{sec:finding}) and from the learned DeepSet indicator of Appendix~\ref{app:deepset}, which attributes a trained nonlinear value network at the full $1.67$M panel in $17$\,s. The combination ``genuinely LLM-driven macro indicator at $N = 10^6$'' remains untested: the DeepSet is a learned surrogate of an engagement indicator, not an LLM-as-judge score. The path-integral attribution itself is agnostic to how $f$ is obtained (Equation~\ref{eq:phi-it} requires only $C^2$ regularity), and Appendix~\ref{app:deepset} shows the practical bottleneck is quadrature resolution rather than scale. Direct validation with an LLM-driven $f$ at this scale is the natural next step.


\end{document}